\newcommand{\KL}{\mathcal{KL}}
\newcommand{\KLD}[2]{\KL\left(#1 \,\|\, #2 \right)}
\newcommand{\Lo}{\mathcal{L}}
\newcommand{\Lt}{\mathcal{L}_{\tau}}
\newcommand{\Ls}{\mathcal{L}_{*}}
\newcommand{\qq}{q_{\phi}}
\newcommand{\pp}{p_{\theta}}
\newcommand{\N}{\mathcal{N}}
\newcommand{\Ind}[1]{\mathbb{I}\left[#1\right]}
\newtheorem{theorem}{Theorem}
\newtheorem{lemma}{Lemma}
\newtheorem{proposition}{Proposition}
\DeclareMathOperator*{\argmax}{arg\,max}
\DeclareMathOperator*{\Argmax}{Arg\,max}
\begin{document}

\twocolumn[

\aistatstitle{Deterministic Decoding for Discrete Data in Variational Autoencoders}

\aistatsauthor{Daniil Polykovskiy \And Dmitry Vetrov}

\aistatsaddress{Insilico Medicine \And  National Research University Higher School of Economics}
]

\begin{abstract}
Variational autoencoders are prominent generative models for modeling discrete data. However, with flexible decoders, they tend to ignore the latent codes.  In this paper, we study a VAE model with a deterministic decoder (DD-VAE) for sequential data that selects the highest-scoring tokens instead of sampling. Deterministic decoding solely relies on latent codes as the only way to produce diverse objects, which improves the structure of the learned manifold. To implement DD-VAE, we propose a new class of bounded support proposal distributions and derive Kullback-Leibler divergence for Gaussian and uniform priors. We also study a continuous relaxation of deterministic decoding objective function and analyze the relation of reconstruction accuracy and relaxation parameters. We demonstrate the performance of DD-VAE on multiple datasets, including molecular generation and optimization problems.
\end{abstract}

\section{Introduction}
Variational autoencoder \citep{Kingma2013} is an autoencoder-based generative model that provides high-quality samples in many data domains, including image generation \citep{razavi2019generating}, natural language processing \citep{semeniuta-etal-2017-hybrid}, audio synthesis \citep{hsu2018hierarchical}, and drug discovery \citep{zhavoronkov2019deep}.

Variational autoencoders use stochastic encoder and decoder. An encoder maps an object $x$ onto a distribution of the latent codes $\qq(z \mid x)$, and a decoder produces a distribution $\pp(x \mid z)$ of objects that correspond to a given latent code. In this paper, we analyze the impact of stochastic decoding on VAE models for discrete data and propose deterministic decoders as an alternative.

With complex stochastic decoders, such as PixelRNN \citep{pmlr-v48-oord16}, VAEs tend to ignore the latent codes, since the decoder is flexible enough to produce the whole data distribution $p(x)$ without using latent codes at all. Such behavior can damage the representation learning capabilities of VAE: we will not be able to use its latent codes for downstream tasks. A deterministic decoder, on the contrary, maps each latent code to a single data point, making it harder to ignore the latent codes, as they are the only source of variation.

\begin{figure}[t]
\begin{center}
\includegraphics[width=0.9\columnwidth]{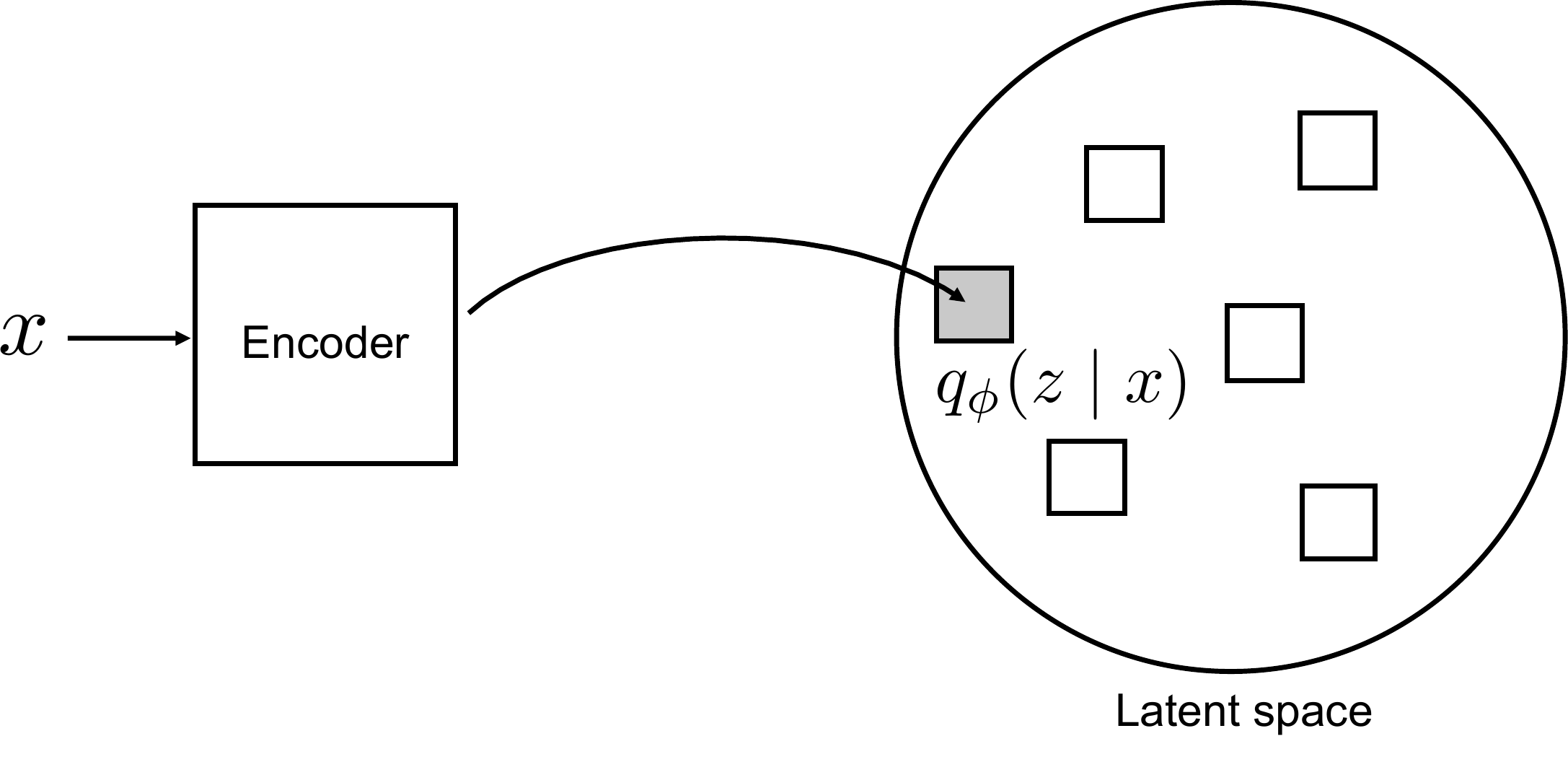}
\caption{The encoder of DD-VAE outputs parameters of bounded support distribution. With Gaussian proposals, lossless auto-encoding is impossible, since the proposals of any two objects overlap.} \label{fig:encoder}
\end{center}
\end{figure}

\begin{figure}[t]
\begin{center}
\includegraphics[width=0.8\columnwidth]{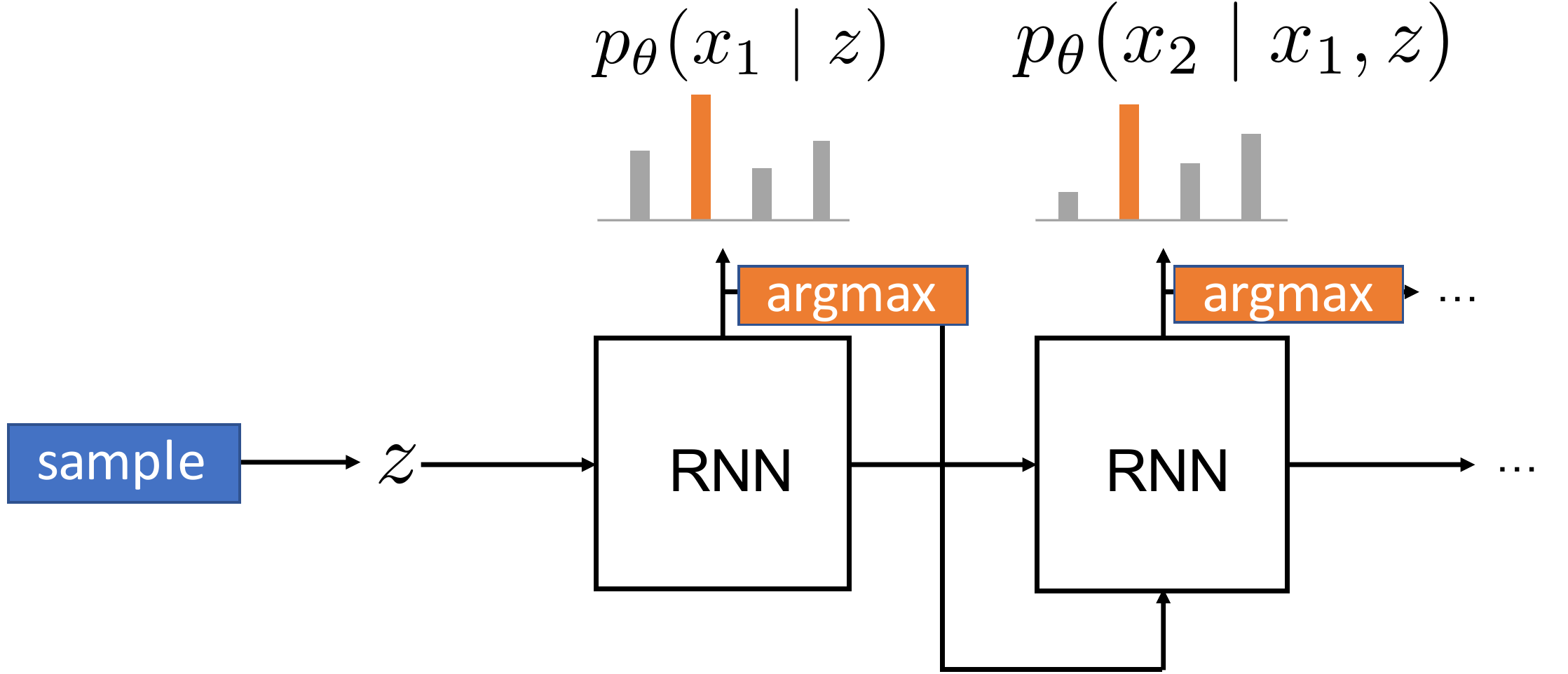}
\caption{During sampling, the decoder selects $\argmax$ of scores $\pp(x_i \mid x_{<i}, z)$. Hence, the only source of variation for the decoder is $z$. We propose a relaxed objective function to optimize through $\argmax$.} \label{fig:argmax}
\end{center}
\end{figure}

One application of latent codes of VAEs is Bayesian optimization of molecular properties. \citet{gomez2018automatic} trained a Gaussian process regressor on the latent codes of VAE and optimized the latent codes to discover molecular structures with desirable properties. With stochastic decoding, a Gaussian process has to account for stochasticity in target variables, since every latent code corresponds to multiple molecular structures. Deterministic decoding, on the other hand, simplifies the regression task, leading to better predictive quality, as we show in the experiments.

Our contribution is three-fold:
\begin{itemize}
    \item We formulate a model of deterministic decoder VAE (DD-VAE), derive its evidence lower bound and propose a convenient approximation with proven convergence to optimal parameters of non-relaxed objective;
    \item We show that lossless auto-encoding is impossible with full support proposal distributions and introduce bounded support distributions as a solution;
    \item We provide experiments on multiple datasets (synthetic, MNIST, MOSES, ZINC) to show that DD-VAE yields both a proper generative distribution and useful latent codes.
\end{itemize}
The code for reproducing the experiments is available at \url{https://github.com/insilicomedicine/DD-VAE}.
\section{Deterministic Decoder VAE (DD-VAE)}
In this section, we formulate a deterministic decoder variational autoencoder (DD-VAE). Next, we show the need for bounded support proposals and introduce them in Section~\ref{sec:bounded_support}. In Section~\ref{sec:relaxation}, we propose a continuous relaxation of the DD-VAE's ELBO. In Section~\ref{sec:relation}, we prove that the optimal solution of the relaxed problem matches the optimal solution of the original problem.

Variational autoencoder (VAE) consists of an encoder $\qq(z \mid x)$ and a decoder $\pp(x \mid z)$. The model learns a mapping of data distribution $p(x)$ onto a prior distribution of latent codes $p(z)$ which is often a standard Gaussian $\N(0, I)$. Parameters $\theta$ and $\phi$ are learned by maximizing a lower bound $\Lo(\theta, \phi)$ on log marginal likelihood $\log p(x)$. $\Lo(\theta, \phi)$ is known as an evidence lower bound (ELBO):
\begin{equation}
\begin{split}
    \Lo(\theta, \phi) =  \mathbb{E}_{x \sim p(x)}\big[& \mathbb{E}_{z \sim \qq(z \mid x)}\log \pp(x \mid z) \\
    & - \KLD{\qq(z \mid x)}{p(z)}\big]. \label{eq:elbo_classic}
\end{split}
\end{equation}
The $\log \pp(x \mid z)$ term in Eq.~\ref{eq:elbo_classic} is a reconstruction loss, and the $\KL$ term is a Kullback-Leibler divergence that encourages latent codes to be marginally distributed as $p(z)$.

For sequence models, $x$ is a sequence $x_1, x_2, \dots, x_{|x|}$, where each token of the sequence is an element of a finite vocabulary $V$, and $|x|$ is the length of sequence $x$. A decoding distribution for sequences is often parameterized as a recurrent neural network that produces a probability distribution over each token $x_{i}$ given the latent code and all previous tokens. The ELBO for such model is:
\begin{equation}
\begin{split}
    \Lo(\theta, \phi) =  \mathbb{E}_{x \sim p(x)}\bigg[& \mathbb{E}_{z \sim \qq(z \mid x)}\sum_{i=1}^{|x|}\log \pi^{\theta}_{x, i, x_i}(z) \\ & -\KLD{\qq(z \mid x)}{p(z)}\bigg], \label{eq:elbo_sequence}
\end{split}
\end{equation}
where $\pi^{\theta}_{x, i, s}(z) = \pp(x_i=s \mid z, x_1, x_2, \dots, x_{i-1})$.

In deterministic decoders, we decode a sequence $\widetilde{x}_{\theta}(z)$ from a latent code $z$ by taking a token with the highest score at each iteration:
\begin{equation}
\widetilde{x}_i = \argmax_{s \in V}\pp(s \mid z, {x}_1, \dots, {x}_{i-1}) = \argmax_{s \in V}\pi^{\theta}_{x, i, s}(z)
\label{eq:argmax}
\end{equation}
To avoid ambiguity, when two tokens have the same maximal probability, $\argmax$ is equal to a special ``undefined'' token that does not appear in the data. Such formulation simplifies derivations in the remaining of the paper. We also assume $\pi^{\theta}_{x, i, s} \in [0, 1]$ for convenience. 
After decoding $\widetilde{x}_{\theta}$, reconstruction term of ELBO is an indicator function which is one, if the model reconstructed a correct sequence, and zero otherwise:
\begin{equation}
    p\left(x \mid \widetilde{x}_{\theta}\left(z\right)\right) = \begin{cases}1, & \widetilde{x}_{\theta}(z) = x \\ 0, & \mathrm{otherwise}\end{cases} \label{eq:delta}
\end{equation}
\begin{equation}
\begin{split}
\Ls(\theta, \phi) = &  \mathbb{E}_{x \sim p(x)}\big[\mathbb{E}_{z \sim \qq(z \mid x)}\log p\left(x \mid \widetilde{x}_{\theta}(z)\right)\\
& -\KLD{\qq(z \mid x)}{p(z)}\big]. \label{eq:elbo0}
\end{split}
\end{equation}
The $\Ls(\theta, \phi)$ is $-\infty$ if the model has non-zero reconstruction error rate, leading us to two questions: is $\Ls$ finite for some parameters $(\theta, \phi)$ and how to optimize $\Ls$. We answer both questions in the following sections.

\subsection{Proposal distributions with bounded support \label{sec:bounded_support}}
In this section, we discuss bounded support proposal distributions $\qq(z \mid x)$ in VAEs and why they are crucial for deterministic decoders.

Variational Autoencoders often use Gaussian proposal distributions 
\begin{equation}
    \qq(z \mid x) = \mathcal{N}\left(z \mid \mu_{\phi}\left(x\right), \Sigma_{\phi}\left(x\right)\right),
\end{equation}
where $\mu_{\phi}(x)$ and $\Sigma_{\phi}(x)$ are neural networks modeling the mean and the covariance matrix of the proposal distribution. For a fixed $z$, Gaussian density $\qq(z \mid x)$ is positive for any $x$. Hence, a lossless decoder has to decode every $x$ from every $z$ with a positive probability. However, a deterministic decoder can produce only a single data point $\widetilde{x}_{\theta}(z)$ for a given $z$, making reconstruction term of $\Ls$ minus infinity. To avoid this problem, we propose to use bounded support proposal distributions.

As bounded support proposal distributions, we suggest to use factorized distributions with marginals defined using a kernel $K$:
\begin{equation}
    \qq(z \mid x) = \prod_{i=1}^d \frac{1}{\sigma^{\phi}_i(x)}K\left(\frac{z_i-\mu^{\phi}_i(x)}{\sigma^{\phi}_i(x)}\right),
\end{equation}
where $\mu^{\phi}_i(x)$ and $\sigma^{\phi}_i(x)$ are neural networks that model location and bandwidth of a kernel $K$; the support of $i$-th dimension of $z$ in $\qq(z \mid x)$ is a range $\left[\mu^{\phi}_i(x)-\sigma^{\phi}_i(x),\, \mu^{\phi}_i(x)+\sigma^{\phi}_i(x)\right]$. We choose a kernel such that we can compute $\KL$ divergence between $q(z \mid x)$ and a prior $p(z)$ analytically. If $p(z)$ is factorized, $\KL$ divergence is a sum of one-dimensional $\KL$ divergences:
\begin{equation}
    \KLD{\qq(z \mid x)}{p(z)} = \sum_{i=1}^d \KLD{\qq(z_i \mid x)}{p(z_i)}.
\end{equation}
In Table \ref{tab:kl_finite}, we show $\KL$ divergence for some bounded support kernels and illustrate their densities in Figure~\ref{fig:kernel}. Note that the form of $\KL$ divergence is very similar to the one for a Gaussian proposal distribution---they only differ in a constant multiplier for $\sigma^2$ and an additive constant. For sampling, we use rejection sampling from $K(\epsilon)$ with a uniform proposal $K(0) \cdot \mathcal{U}[-1, 1]$ and apply a reparametrization trick to obtain a final sample: $z = \epsilon \cdot \sigma + \mu$. The acceptance rate in such sampling is $\frac{1}{2K(0)}$. Hence, to sample a batch of size $N$, we sample $N \cdot 2K(0)$ objects and repeat sampling until we get at least $N$ accepted samples. We also store a buffer with excess samples and use them in the following batches.

\begin{figure}[h]
\begin{center}
\includegraphics[width=0.99\columnwidth]{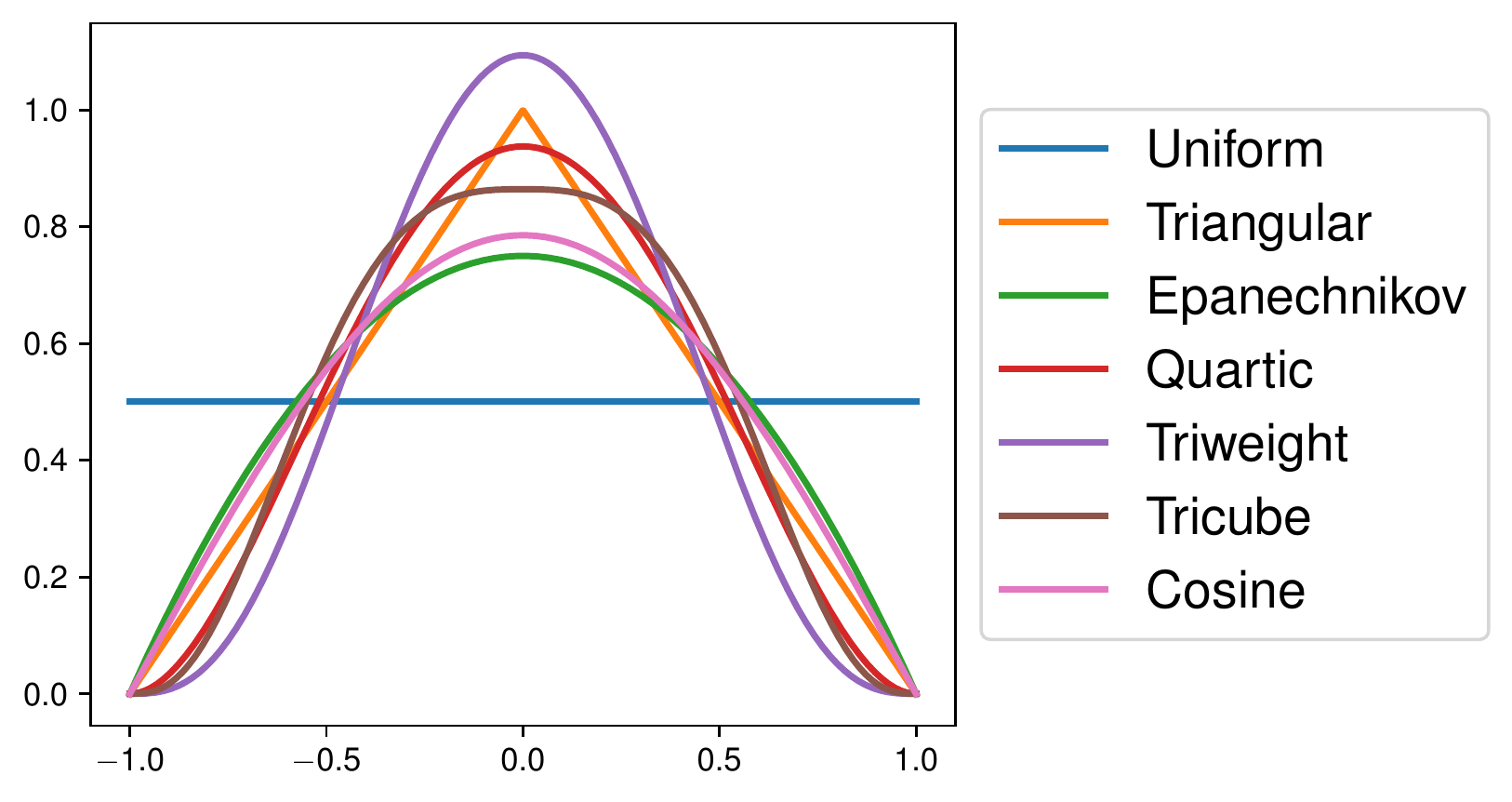}
\caption{Bounded support proposals with $\mu=0$ and $\sigma=1$ for which we derived $\KL$ divergence.}
\label{fig:kernel}
\end{center}
\end{figure}

With bounded support proposals, we can use a uniform distribution $U[-1, 1]^d$ as a prior in VAE as long as the support of $\qq(z \mid x)$ lies inside the support of a prior distribution. In practice, we ensure this by transforming $\mu$ and $\sigma$ from the encoder into $\mu'$ and $\sigma'$ using the following transformation:
\begin{eqnarray}
\mu' = \frac{\tanh(\mu + \sigma) + \tanh(\mu-\sigma)}{2}, \\
\sigma' = \frac{\tanh(\mu + \sigma) - \tanh(\mu-\sigma)}{2}.
\end{eqnarray}
We report derived $\KL$ divergences for a uniform prior in Table~\ref{tab:kl_uniform}.

\begin{table*}[ht]
\caption{$\mathcal{KL}$ divergence between a proposal $q$ with support $\left|z-\mu\right| \le \sigma$ and prior $\mathcal{N}(0, 1)$.}
\label{tab:kl_finite}
\begin{center}
\begin{small}
\begin{sc}
\begin{tabular}{lll}
\toprule
Kernel & $q(z \mid \sigma, \mu)$ & $\mathcal{KL}\left(q\left(z \mid \sigma, \mu\right) \|\, \mathcal{N}\left(0, 1\right)\right)$ \\
\midrule
Uniform & $\frac{1}{2 \sigma}$ & $\frac{1}{2}\mu^2 + \frac{1}{6}\sigma^2 - \log \sigma+ \frac{1}{2}\log(2\pi) - \log 2 $ \\
Triangular & $\frac{1}{\sigma}\left(1-\left.\left| \frac{z-\mu}{\sigma}\right.\right| \right)$ & $\frac{1}{2}\mu^2 + \frac{1}{12}\sigma^2 - \log \sigma+ \frac{1}{2} \log(2\pi) - \frac{1}{2} $ \\
Epanechnikov & $\frac{3}{4 \sigma} \left(1-\frac{(z-\mu)^2}{\sigma^2}\right)$ &  $\frac{1}{2}\mu^2+\frac{1}{10}\sigma^2-\log \sigma+\frac{1}{2} \log (2 \pi )-\frac{5}{3}+\log 3$ \\
Quartic & $\frac{15}{16 \sigma} \left(1-\frac{(z-\mu)^2}{\sigma^2}\right)^2$ & $\frac{1}{2}\mu^2+\frac{1}{14}\sigma^2-\log\sigma +\frac{1}{2} \log (2 \pi )-\frac{47}{15}+ \log{15}$ \\
Triweight & $\frac{35}{32 \sigma} \left(1-\frac{(z-\mu)^2}{\sigma^2}\right)^3$ & $\frac{1}{2}\mu^2+\frac{1}{18}\sigma^2-\log \sigma+\frac{1}{2} \log (2 \pi )-\frac{319}{70}+\log 70$ \\
Tricube & $\frac{70}{81\sigma} \left(1-\frac{\left|z-\mu\right|^3}{\sigma^3}\right)^3$ & $\frac{1}{2}\mu^2+\frac{35}{486}\sigma^2 - \log \sigma  + \frac{1}{2}\log \left(2 \pi\right)+\frac{\pi  \sqrt{3}}{2}-\frac{1111}{140} + \log{70\sqrt{3}}$ \\
Cosine & $\frac{\pi}{4 \sigma}  \cos \left(\frac{\pi  (z-\mu)}{2 \sigma}\right)$ & $\frac{1}{2}\mu^2 + \left(\frac{1}{2}-\frac{4}{\pi^2}\right) \sigma^2-\log\sigma + \frac{1}{2}\log \left(2 \pi\right)-1 + \log\frac{\pi}{2}$ \\
\midrule
Gaussian $(-\infty, \infty)$ & $\frac{1}{\sqrt{2\pi}\sigma}e^{-\frac{1}{2\sigma^2}(z-\mu)^2}$ & $\frac{1}{2}\mu^2 + \frac{1}{2}\sigma^2 - \log\sigma - \frac{1}{2}$ \\
\bottomrule
\end{tabular}
\end{sc}
\end{small}
\end{center}
\end{table*}

\begin{table}[h]
\caption{$\mathcal{KL}$ divergence between a proposal $q$ with support $\left|z-\mu\right| \le \sigma$ and a uniform prior $\mathcal{U}[-1, 1]$ if the support of $q$ lies in $[-1, 1]$.}
\label{tab:kl_uniform}
\begin{center}
\begin{small}
\begin{sc}
\begin{tabular}{lll}
\toprule
Kernel & $\mathcal{KL}\left(q\left(z \mid \sigma, \mu\right) \|\, \mathcal{U}\left[-1, 1\right]\right)$ \\
\midrule
Uniform & $-\log \sigma$ \\
Triangular & $-\frac12 + \log2 -\log \sigma$ \\
Epanechnikov & $-\frac53 + \log6 -\log \sigma$\\
Quartic & $-\frac{47}{15} + \log30 -\log \sigma$ \\
Triweight & $-\frac{319}{70} + \log140 -\log \sigma$ \\
Tricube & $-\frac{1111}{140} + \frac{\pi  \sqrt{3}}{2} + \log{140 \sqrt{3}} - \log\sigma$ \\
Cosine & $-1 + \log\pi -\log\sigma$ \\
\bottomrule
\end{tabular}
\end{sc}
\end{small}
\end{center}
\end{table}

For discrete data, with bounded support proposals we can ensure that for sufficiently flexible encoder and decoder, there exists a set of parameters $(\theta, \phi)$ for which proposals $\qq(z \mid x)$ do not overlap for different $x$, and hence ELBO $\Ls(\theta, \phi)$ is finite. For example, we can enumerate all objects and map $i$-th object to a range $[i, i+1]$.

\subsection{Approximating ELBO \label{sec:relaxation}}
In this section, we discuss how to optimize a discontinuous function $\Ls(\theta, \phi)$ by approximating it with a smooth function. We also show the convergence of optimal parameters of an approximated ELBO to the optimal parameters of the original function in the next section.

We start by equivalently defining $\argmax$ from Eq.~\ref{eq:argmax} for some array $r$:
\begin{equation}
\begin{split}
\Ind{i = \argmax_{j}r_j} = \prod_{j \neq i}\Ind{r_i > r_j} \label{eq:argmax_ind}
\end{split}
\end{equation}
We approximate Eq.~\ref{eq:argmax_ind} by introducing a smooth relaxation $\sigma_{\tau}(x)$ of an indicator function $\Ind{x > 0}$ parameterized with a temperature parameter $\tau \in (0, 1)$:
\begin{equation}
    \Ind{x > 0} \approx \sigma_{\tau}(x) = \frac{1}{1 + \exp{(-x / \tau)}\left[\frac{1}{\tau} - 1\right]}\label{eq:approximate_ind}
\end{equation}

\begin{figure}[h]
\begin{center}
\includegraphics[width=0.9\columnwidth]{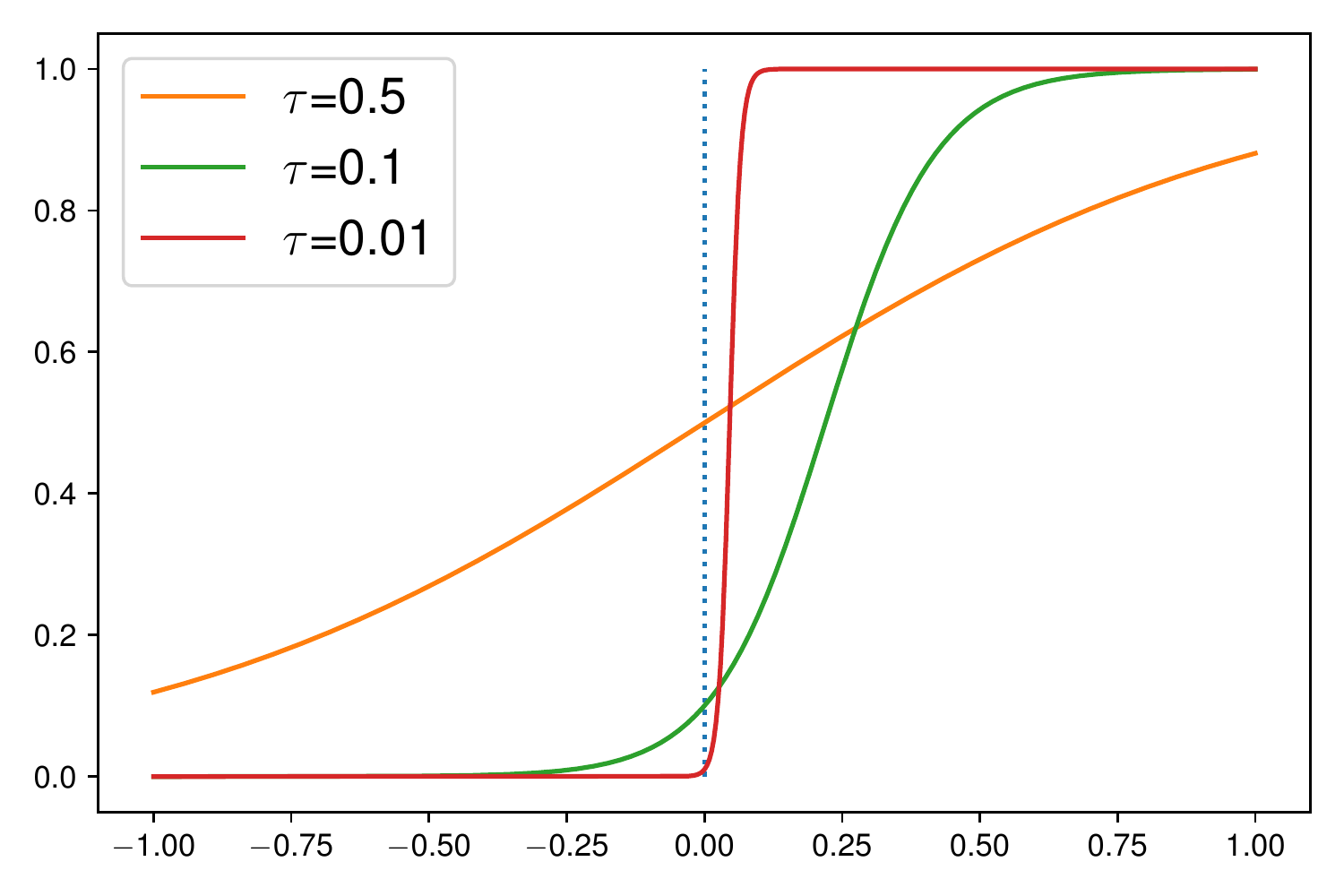}
\caption{Relaxation $\sigma_{\tau}(x)$ of an indicator function $\Ind{x > 0}$ for different $\tau$.}
\label{fig:temp_likelihood}
\end{center}
\end{figure}
Note that $\sigma_{\tau}(x)$ converges to $\Ind{x > 0}$ pointwise. In Figure~\ref{fig:temp_likelihood} we show function $\sigma_{\tau}(x)$ for different values of $\tau$. Substituting $\argmax$ with the proposed relaxation, we get the following approximation of the evidence lower bound:
\begin{equation}
\begin{split}
    \Lt(\theta,& \phi) =  \mathbb{E}_{x \sim p(x)}\bigg[\\
    & \mathbb{E}_{z \sim \qq(z \mid x)}\sum_{i=1}^{|x|}
    \sum_{s \neq x_i}\log\sigma_{\tau}\left(\pi^{\theta}_{x, i, x_i}(z) - \pi^{\theta}_{x, i, s}(z)\right)\\
    & -\KLD{\qq(z \mid x)}{p(z)}\bigg] \label{eq:temp_likelihood}
\end{split}
\end{equation}
A proposed $\Lt$ is finite for $0 < \tau < 1$ and converges to $\Ls$ pointwise. In the next section, we formulate the theorem that shows that if we gradually decrease temperature $\tau$ and solve maximization problem for ELBO $\Lt$, we will converge to optimal parameters of a non-relaxed ELBO $\Ls$.

\subsection{Convergence of optimal parameters of $\Lt$ to optimal parameters of $\Ls$ \label{sec:relation}}
In this section, we introduce auxiliary functions that are useful for assessing the quality of the model and formulate a theorem on the convergence of optimal parameters of $\Lt$ to optimal parameters of $\Ls$. 

Denote $\Delta(\widetilde{x}_{\theta}, \phi)$ a sequence-wise error rate for a given encoder and decoder:
\begin{equation}
    \Delta(\widetilde{x}_{\theta}, \phi) = \mathbb{E}_{x \sim p(x)}\mathbb{E}_{z \sim \qq(z \mid x)}\Ind{\widetilde{x}_{\theta}(z) \neq x} \label{eq:defdelta}.
\end{equation}
For a given $\phi$, we can find an optimal decoder and a corresponding sequence-wise error rate $\Delta(\phi)$ by rearranging the terms in Eq.~\ref{eq:defdelta} and applying importance sampling:
\begin{equation}
\begin{split}
    \Delta(\widetilde{x}_{\theta}, \phi) & =  1 - \mathbb{E}_{z \sim p(z)}\mathbb{E}_{x \sim p(x)}\frac{\qq(z \mid x)}{p(z)}\Ind{\widetilde{x}_{\theta}(z) = x} \\
     & = 1 - \mathbb{E}_{z \sim p(z)}\frac{p\big(\widetilde{x}_{\theta}\left(z\right)\big)\qq(z \mid \widetilde{x}_{\theta}\left(z\right))}{p(z)}  \\
     & \geq 1 - \mathbb{E}_{z \sim p(z)}\frac{p\big(\widetilde{x}^*_{\phi}\left(z\right)\big)\qq(z \mid \widetilde{x}^*_{\phi}\left(z\right))}{p(z)} \\
     & = \Delta(\widetilde{x}^*_{\phi}, \phi) = \Delta(\phi) \ge 0, \label{eq:def_delta_phi}
\end{split}
\end{equation}
where $\widetilde{x}^*_{\phi}\left(z\right)$ is an optimal decoder given by:
\begin{equation}
    \widetilde{x}^*_{\phi}\left(z\right) \in \Argmax_{x \in \chi}p(x)\qq(z \mid x). \label{eq:optimal_decoder}
\end{equation}
Here, $\chi$ is a set of all possible sequences. Denote $\Omega$ a set of parameters for which ELBO $\Ls$ is finite:
\begin{equation}
    \Omega = \left\{(\theta, \phi) \mid \Ls(\theta, \phi) > -\infty \right\}
\end{equation}
\begin{theorem}
\label{theorem:1}
Assume that $\Omega \neq \emptyset$, length of sequences in $\chi$ is bounded $(\exists L: |x| \le L, \forall x \in \chi)$, and $\Theta$ and $\Phi$ are compact sets of possible parameter values. Assume that $\qq(z \mid x)$ is equicontinuous in total variation for any $\phi$ and $x$:
\begin{equation}
\begin{split}
    & \forall\epsilon>0, \exists\delta=\delta(\epsilon, x, \phi)>0: \\
    & \|\phi-\phi'\|<\delta \Rightarrow \int\left|\qq(z \mid x)-q_{\phi'}(z \mid x)\right|dz < \epsilon.
\end{split}
\end{equation}
Let $\tau_n, \phi_n, \theta_n$ be such sequences that:
\begin{eqnarray}
\lim_{n \to \infty} \tau_n = 0, ~~\tau_n \in (0, 1),\\
(\theta_n, \phi_n) \in \Argmax_{\theta \in \Theta, \phi \in \Phi} \Lo_{\tau_n}(\theta, \phi),
\end{eqnarray}
sequence $\{\phi_n\}$ converges to $\widetilde{\phi}$, and for any $\phi$ such that $\Delta(\phi)=0$ exists $\theta$ such that $\Delta(\widetilde{x}_{\theta}, \phi) = 0$. Let $\widetilde{\theta}$ be:
\begin{equation}
    \widetilde{\theta} \in \Argmax_{\theta \in \Theta} \Ls(\theta, \widetilde{\phi}). \label{eq:find_optimal_decoder}
\end{equation}
Then the sequence-wise error rate decreases asymptotically as
\begin{equation}
    \Delta(\widetilde{x}_{\theta_n}, \phi_n) = \mathcal{O}\left(\frac{1}{\log(1/\tau_n)}\right),
\end{equation}
$\Delta(\widetilde{\phi})=0$, and final parameters $(\widetilde{\theta}, \widetilde{\phi})$ solve the optimization problem for $\Ls$: 
\begin{equation}
    \Ls(\widetilde{\theta}, \widetilde{\phi}) = \sup_{\theta \in \Theta, \phi \in \Phi}\Ls(\theta, \phi).
\end{equation}
\end{theorem}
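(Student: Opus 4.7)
The plan is to derive all three conclusions from a single comparison of $\Lt(\theta_n,\phi_n)$ with $\Lt$ evaluated at a reference point in $\Omega$, the (nonempty) set where reconstruction is perfect. The key elementary property I would exploit is that $\sigma_\tau(x)\leq 1$ with $\sigma_\tau(0)=\tau$, and that $\sigma_\tau$ is strictly increasing. Whenever $\widetilde{x}_\theta(z)\neq x$, the tie-breaking convention of Eq.~\ref{eq:argmax} forces at least one margin $m_{i,s}(z):=\pi^\theta_{x,i,x_i}(z)-\pi^\theta_{x,i,s}(z)$ to be non-positive, contributing at most $\log\sigma_\tau(0)=\log\tau$ to the double sum; all remaining summands are $\leq 0$. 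This yields the pointwise bound $\sum_{i,s\neq x_i}\log\sigma_\tau(m_{i,s}(z))\leq \log\tau\cdot\Ind{\widetilde{x}_\theta(z)\neq x}$, which integrates to
\begin{equation*}
\Lt(\theta,\phi)\leq \log\tau\cdot\Delta(\widetilde{x}_\theta,\phi)-\mathbb{E}_{x}\KLD{\qq(z\mid x)}{p(z)}.
\end{equation*}

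To extract the rate, I would lower-bound $\Lt(\theta^*,\phi^*)$ at any fixed $(\theta^*,\phi^*)\in\Omega$. Perfect reconstruction forces every margin $m_{i,s}(z)$ to be strictly positive on the compact support of $q_{\phi^*}(z\mid x)$; continuity of $\pi^{\theta^*}$ in $z$ plus compactness then give a uniform positive infimum $m_0>0$ for each $x$, so $\log\sigma_{\tau_n}(m_{i,s}(z))\geq\log\sigma_{\tau_n}(m_0)\to 0$ uniformly on the support. Summing over $|x|\leq L$ and the finite vocabulary yields $\Lt(\theta^*,\phi^*)\to\Ls(\theta^*,\phi^*)$. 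Combining $\Lt(\theta_n,\phi_n)\geq\Lt(\theta^*,\phi^*)$ with the bound of the first paragraph and using $\KL\geq 0$:
\begin{equation*}
\log\tau_n\cdot\Delta(\widetilde{x}_{\theta_n},\phi_n)\geq\Lt(\theta^*,\phi^*)\geq\Ls(\theta^*,\phi^*)-1
\end{equation*}
for $n$ large. Since the right-hand side is a $\tau$-independent negative constant $-C$, dividing by $\log\tau_n<0$ gives $\Delta(\widetilde{x}_{\theta_n},\phi_n)\leq C/\log(1/\tau_n)=\mathcal{O}(1/\log(1/\tau_n))$.

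For the second claim, Eq.~\ref{eq:def_delta_phi} gives $\Delta(\phi_n)\leq\Delta(\widetilde{x}_{\theta_n},\phi_n)\to 0$. The representation $\Delta(\phi)=1-\int\max_{x\in\chi}p(x)q_\phi(z\mid x)\,dz$ is continuous in $\phi$: $\chi$ is finite (bounded length over a finite vocabulary), and the TV-equicontinuity hypothesis sends $q_{\phi_n}(\cdot\mid x)\to q_{\widetilde{\phi}}(\cdot\mid x)$ in $L^1$ for each $x$, so the pointwise max of the finitely many $p(x)q_\phi(z\mid x)$ converges in $L^1$. Hence $\Delta(\widetilde{\phi})=\lim\Delta(\phi_n)=0$, and by hypothesis there exists $\widetilde{\theta}$ with $\Delta(\widetilde{x}_{\widetilde{\theta}},\widetilde{\phi})=0$, making $\Ls(\widetilde{\theta},\widetilde{\phi})=-\mathbb{E}_x\KLD{q_{\widetilde{\phi}}(z\mid x)}{p(z)}$ finite. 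For global optimality, pick any $(\theta',\phi')\in\Omega$; from $\Lt(\theta_n,\phi_n)\geq\Lt(\theta',\phi')\to\Ls(\theta',\phi')$ and $\Lt(\theta_n,\phi_n)\leq -\mathbb{E}_x\KLD{q_{\phi_n}(z\mid x)}{p(z)}$ I obtain $\limsup_n\mathbb{E}_x\KLD{q_{\phi_n}(z\mid x)}{p(z)}\leq\mathbb{E}_x\KLD{q_{\phi'}(z\mid x)}{p(z)}$. Lower semicontinuity of $\KL$ under TV convergence together with Fatou's lemma in $x$ then gives $\mathbb{E}_x\KLD{q_{\widetilde{\phi}}(z\mid x)}{p(z)}\leq\mathbb{E}_x\KLD{q_{\phi'}(z\mid x)}{p(z)}$, i.e.\ $\Ls(\widetilde{\theta},\widetilde{\phi})\geq\Ls(\theta',\phi')$; since $\Ls=-\infty$ off $\Omega$, this is the global supremum over $\Theta\times\Phi$.

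The main technical obstacle I anticipate is the convergence $\Lt(\theta^*,\phi^*)\to\Ls(\theta^*,\phi^*)$, which is what produces the $\tau$-independent constant $C$ driving the rate: without a uniform positive margin $m_0$ on the support, the bound degrades below $1/\log(1/\tau_n)$. Extracting $m_0$ from a bounded-support proposal (Section~\ref{sec:bounded_support}) plus continuity of $\pi^\theta$ in $z$ (implicit in the smoothness of the neural-network decoder) is therefore the crucial step. A secondary subtlety is lifting the pointwise-in-$x$ TV-equicontinuity hypothesis to $L^1$-continuity of $\Delta(\phi)$ and to the lower semicontinuity of $\KL$ under TV convergence, both of which are standard but deserve explicit justification to close the final optimality argument.
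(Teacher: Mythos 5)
Your overall strategy for the rate and for $\Delta(\widetilde{\phi})=0$ matches the paper's: the upper bound $\Lt(\theta,\phi)\le\log\tau\cdot\Delta(\widetilde{x}_\theta,\phi)-\mathbb{E}_x\KLD{\qq(z\mid x)}{p(z)}$ is the paper's Eq.~\ref{eq:delta_estimate} (your constant is in fact cleaner than the paper's $|V|L$ factor, and either suffices), and comparing $\Lo_{\tau_n}(\theta_n,\phi_n)$ against a fixed reference point of $\Omega$ is exactly how the paper's Lemma~\ref{lemma:3} extracts $\mathcal{O}(1/\log(1/\tau_n))$; your continuity argument for $\Delta(\phi)$ via total-variation equicontinuity and finiteness of $\chi$ is the paper's Lemma~2.

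The genuine gap is in your lower bound at the reference point. You claim that $\Delta(\widetilde{x}_{\theta^*},\phi^*)=0$, continuity of $\pi^{\theta^*}$ in $z$, and compactness of the support of $q_{\phi^*}(z\mid x)$ yield a \emph{uniform} positive margin $m_0>0$. They do not: perfect reconstruction only forces the margins $m_{i,s}(z)$ to be positive for $q_{\phi^*}$-almost every $z$, and a continuous function that is positive almost everywhere on a compact set can still have infimum zero (it may vanish on a null subset of the support). In that case $\inf_z\log\sigma_{\tau}(m_{i,s}(z))\to-\infty$ and the uniform argument collapses. The conclusion you need, $\Lt(\theta^*,\phi^*)\to\Ls(\theta^*,\phi^*)$, is nevertheless true; the paper's Lemma~1 proves it without any uniform margin by working with the pushforward distribution of the margin $\delta$: for any $\epsilon>0$ choose $\delta_0$ with $p(\delta<\delta_0)<\epsilon$, split the range of $\delta$ at $\delta_{1/2}=\tau\log(1/\tau-1)$ (where $\sigma_\tau$ equals $1/2$) and at $\delta_0$, bound the small-margin contribution by a constant times $\delta_{1/2}\log\tau\to 0$ using boundedness of the density of $\delta$, and let the remaining pieces converge pointwise. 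You must replace your uniform-margin step by an argument of this type, since the constant $C$ in your rate and the convergence $\Lt(\theta',\phi')\to\Ls(\theta',\phi')$ reused in your last paragraph both rest on it.

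Your final optimality argument is a genuinely different and arguably leaner route. The paper first proves that $\Ls$ attains its supremum at some $(\theta^*,\phi^*)$ (via continuity of $\Delta$, compactness of $\{\phi:\Delta(\phi)=0\}$, and constancy of $\Ls$ in $\theta$ on $\Omega$), and then derives a contradiction from $\Ls(\widetilde{\theta},\widetilde{\phi})<\Ls(\theta^*,\phi^*)$ using continuity of $\KL(\phi)$. You instead compare directly against an arbitrary $(\theta',\phi')\in\Omega$ and pass to the limit using lower semicontinuity of the KL divergence under total-variation convergence, concluding $\Ls(\widetilde{\theta},\widetilde{\phi})\ge\Ls(\theta',\phi')$ for every such pair and hence equality with the supremum, without ever needing the supremum to be attained. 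This buys a weaker regularity requirement on $\phi\mapsto\KL(\phi)$ (lower semicontinuity rather than continuity), at the cost of invoking a standard but nontrivial semicontinuity fact that you should state and justify explicitly.
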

\begin{proof}
See Appendix~\ref{sec:th1 proof}.
\end{proof}

The maximum length of sequences is bounded in the majority of practical applications. Equicontinuity assumption is satisfied for all distributions we considered in Table~\ref{tab:kl_finite} if $\mu$ and $\sigma$ depend continuously on $\phi$ for all $x \in \chi$. $\Omega$ is not empty for bounded support distributions when encoder and decoder are sufficiently flexible, as discussed in Section~\ref{sec:bounded_support}.

Eq.~\ref{eq:find_optimal_decoder} suggests that after we finish training the autoencoder, we should fix the encoder and fine-tune the decoder. Since $\Delta(\widetilde{\phi})=0$, the optimal stochastic decoder for such $\phi$ is deterministic---any $z$ corresponds to a single $x$ except for a zero probability subset. In theory, we could learn $\widetilde{\theta}$ for a fixed $\widetilde{\phi}$ by optimizing a reconstruction term of ELBO from Eq.~\ref{eq:elbo_sequence}:
\begin{equation}
    \Lo_{\textrm{rec}}(\theta) = \mathbb{E}_{x \sim p(x)}\mathbb{E}_{z \sim q_{\widetilde{\phi}}(z \mid x)}\sum_{i=1}^{|x|}\log \pi^{\theta}_{x, i, x_i},
\end{equation}
but since in practice we do not anneal the temperature exactly to zero, we found such fine-tuning optional.

\section{Related Work}
Autoencoder-based generative models consist of an encoder-decoder pair and a regularizer that forces encoder outputs to be marginally distributed as a prior distribution. This regularizer can take a form of a $\KL$ divergence as in Variational Autoencoders \citep{Kingma2013} or an adversarial loss as in Adversarial Autoencoders \citep{Makhzani2015} and Wasserstein Autoencoders \citep{tolstikhin2017wasserstein}. Besides autoencoder-based generative models, generative adversarial networks \citep{Goodfellow2014} and normalizing flows \citep{dinh2014nice, dinh2016density} were shown to be useful for sequence generation \citep{yu2017seqgan, pmlr-v80-oord18a}.

Variational autoencoders are prone to posterior collapse when the encoder outputs a prior distribution, and a decoder learns the whole distribution $p(x)$ by itself. Posterior collapse often occurs for VAEs with autoregressive decoders such as PixelRNN \citep{pmlr-v48-oord16}. Multiple approaches were proposed to tackle posterior collapse, including decreasing the weight $\beta$ of a $\KL$ divergence \citep{higgins2017beta}, or encouraging high mutual information between latent codes and corresponding objects \citep{zhao2019infovae}.

Other approaches modify a prior distribution, making it more complex than a proposal: a Gaussian mixture model \citep{pmlr-v84-tomczak18a, kuznetsov2019prior}, autoregressive priors \citep{chen2016variational}, or training a deterministic encoder and obtaining prior with a kernel density estimation \citep{ghosh2019variational}. Unlike these approaches, we conform to the standard Gaussian prior, and study the required properties of encoder and decoder to achieve deterministic decoding.

Deep generative models became a prominent approach in drug discovery as a way to rapidly discover potentially active molecules \citep{Polykovskiy2018, zhavoronkov2019deep}. Recent works explored feature-based \citep{kadurin2016cornucopia}, string-based \citep{gomez2018automatic, Segler2018-eo}, and graph-based \citep{pmlr-v80-jin18a, De_Cao2018-ju, You2018-yp} generative models for molecular structures. In this paper, we use a simplified molecular-input line-entry system (SMILES)  \citep{Weininger1970, Weininger1989} to represent the molecules---a system that represents a molecular graph as a string using a depth-first search order traversal. Multiple algorithms were proposed to exploit SMILES structure using formal grammars \citep{kusner2017grammar, dai2018syntax}.

\begin{figure*}[ht]
\begin{center}
\begin{subfigure}{0.3\textwidth}
\includegraphics[width=\linewidth]{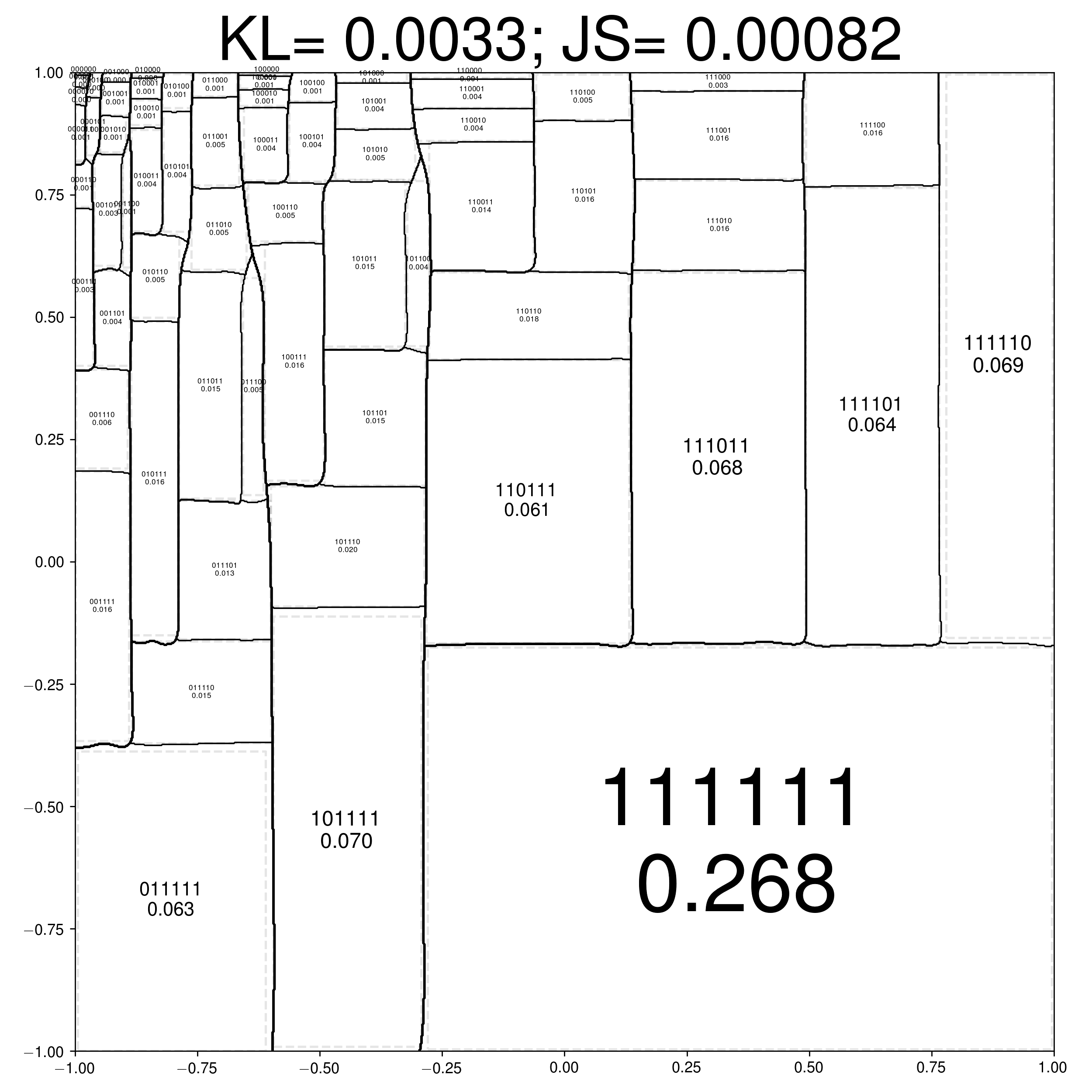} \caption{DD-VAE. U prior; U proposal} \label{fig:1a}
\end{subfigure}
\begin{subfigure}{0.3\textwidth}
\includegraphics[width=\linewidth]{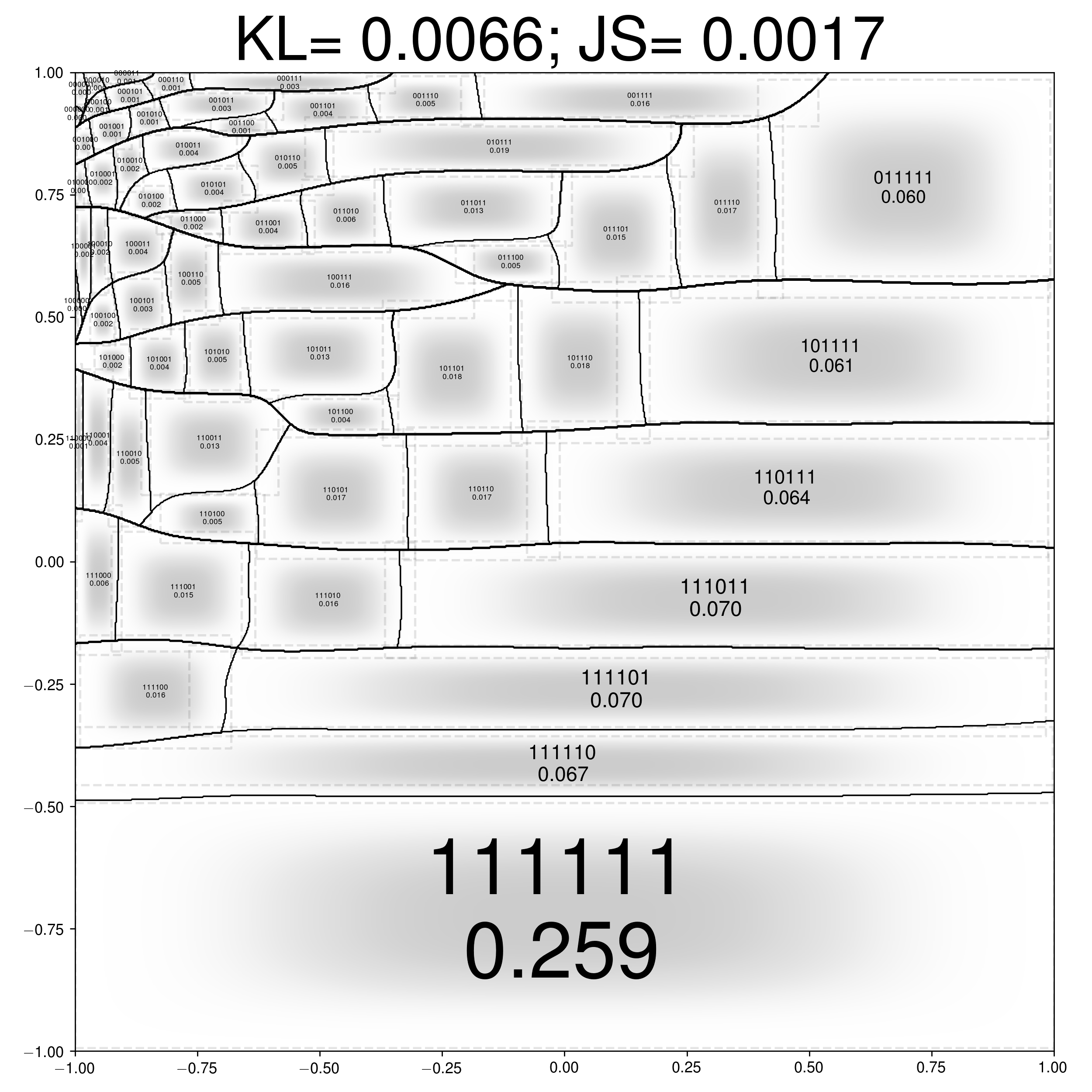} \caption{DD-VAE. U prior; T proposal} \label{fig:1a}
\end{subfigure}
\begin{subfigure}{0.3\textwidth}
\includegraphics[width=\linewidth]{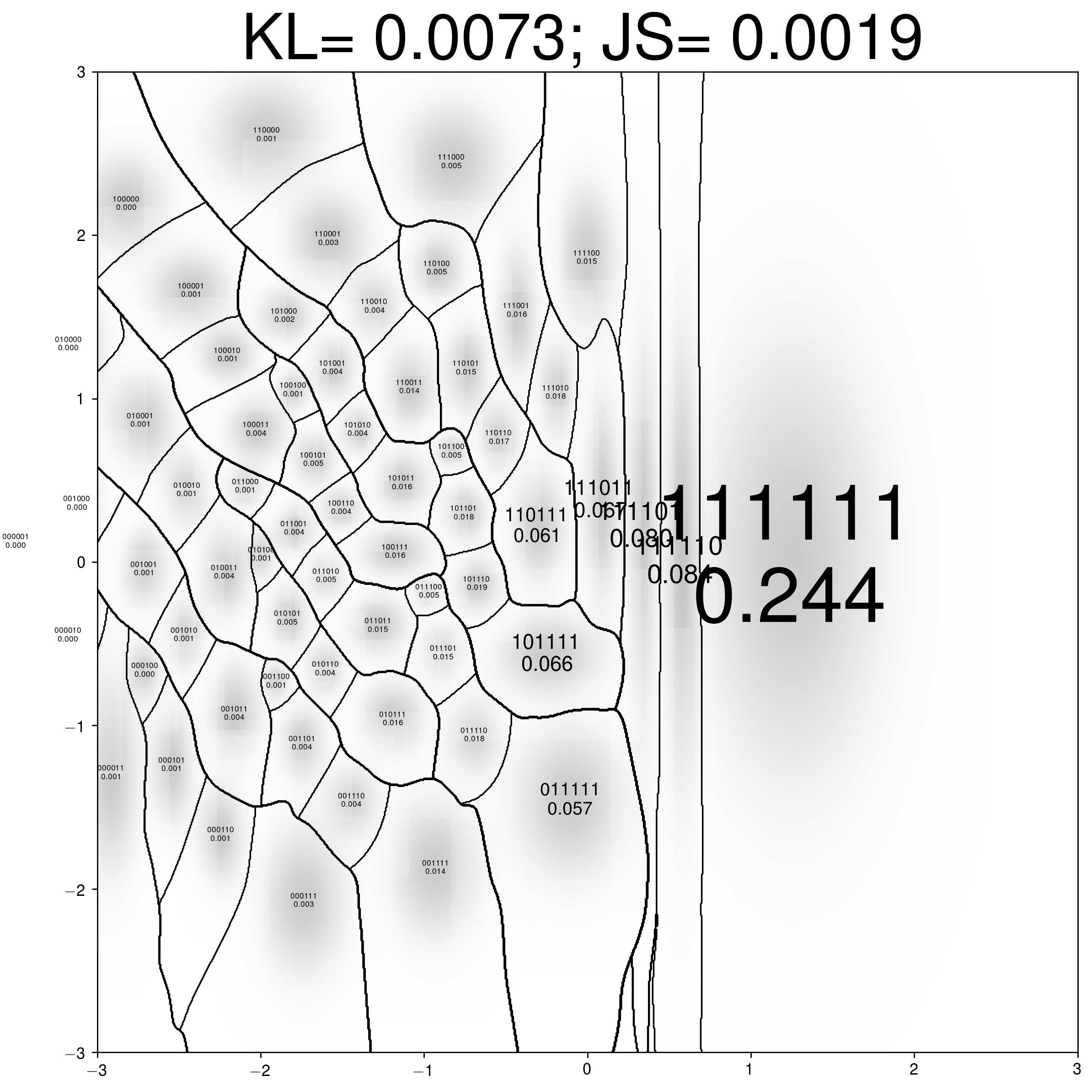} \caption{VAE. G prior; G proposal} \label{fig:1a}
\end{subfigure}
\caption{Learned 2D manifold on synthetic data. Dashed lines indicate proposal boundaries, solid lines indicate decoding boundaries. For each decoded string, we write its probability under deterministic decoding. Left and middle images: DD-VAE; Right image: VAE. U = Uniform, G = Gaussian, T = Tricube.}  \label{fig:synthetic}
\end{center}
\end{figure*}

\section{Experiments}
We experiment on four datasets: synthetic and MNIST datasets to visualize a learned manifold structure, on MOSES molecular dataset to analyze the distribution quality of DD-VAE, and ZINC dataset to see if DD-VAE's latent codes are suitable for goal-directed optimization. We describe model hyperparameters in Appendix~\ref{hyperparameters}. 

\subsection{Synthetic data \label{sec:synthetic}}

This dataset provides a proof of concept comparison of standard VAE with a stochastic decoder and a DD-VAE model with a deterministic decoder. The data consist of 6-bit strings, a probability of each string is given by independent Bernoulli samples with a probability of $1$ being 0.8. For example, a probability of string "110101" is $0.8^4\cdot0.2^2 \approx 0.016$.

In Figure~\ref{fig:synthetic}, we illustrate the 2D latent codes learned with the proposed model. As an encoder and decoder, we used a 2-layer gated recurrent unit (GRU) \citep{cho-etal-2014-learning} network with a hidden size 128. We provide illustrations for a proposed model with a uniform prior and compare uniform and tricube proposals. For a baseline model, we trained a $\beta$-VAE with Gaussian proposal and prior. We used $\beta=0.1$, as for larger $\beta$ we observed posterior collapse. For our model, we used $\beta=1$, which is equivalent to the described model.

For a baseline model, we observe an irregular decision boundary, which also behaves unpredictably for latent codes that are far from the origin. Both uniform and tricube proposals learn a brick-like structure that covers the whole latent space. During training, we observed that the uniform proposal tends to separate proposal distributions by a small margin to ensure there is no overlap between them. As the training continues, the width of proposals grows until they cover the whole space. For the tricube proposal, we observed a similar behavior, although the model tolerates slight overlaps.

\subsection{Binary MNIST}
To evaluate the model on imaging data, we considered a binarized MNIST \citep{lecun-mnisthandwrittendigit-2010} dataset obtained by thresholding the original $0$ to $1$ gray-scale images by a threshold of $0.3$. The goal of this experiment is to visualize how DD-VAE learns 2D latent codes on moderate size datasets.

For this experiment, we trained a 4-layer fully-connected encoder and decoder with structure $784 \to 256 \to 128 \to 32 \to 2$. In Figure~\ref{fig:mnist_latent}, we show learned latent space structure for a baseline VAE with Gaussian prior and proposal and compare it to a DD-VAE with uniform prior and proposal. Note that the uniform representation evenly covers the latent space, as all points have the same prior probability. This property is useful for visualization tasks. The learned structure better separates classes, although it was trained in an unsupervised manner: K-nearest neighbor classifier on 2D latent codes yields $87.8\%$ accuracy for DD-VAE and $86.1\%$ accuracy for VAE.

\begin{figure}[h]
\begin{center}
\includegraphics[width=0.92\columnwidth]{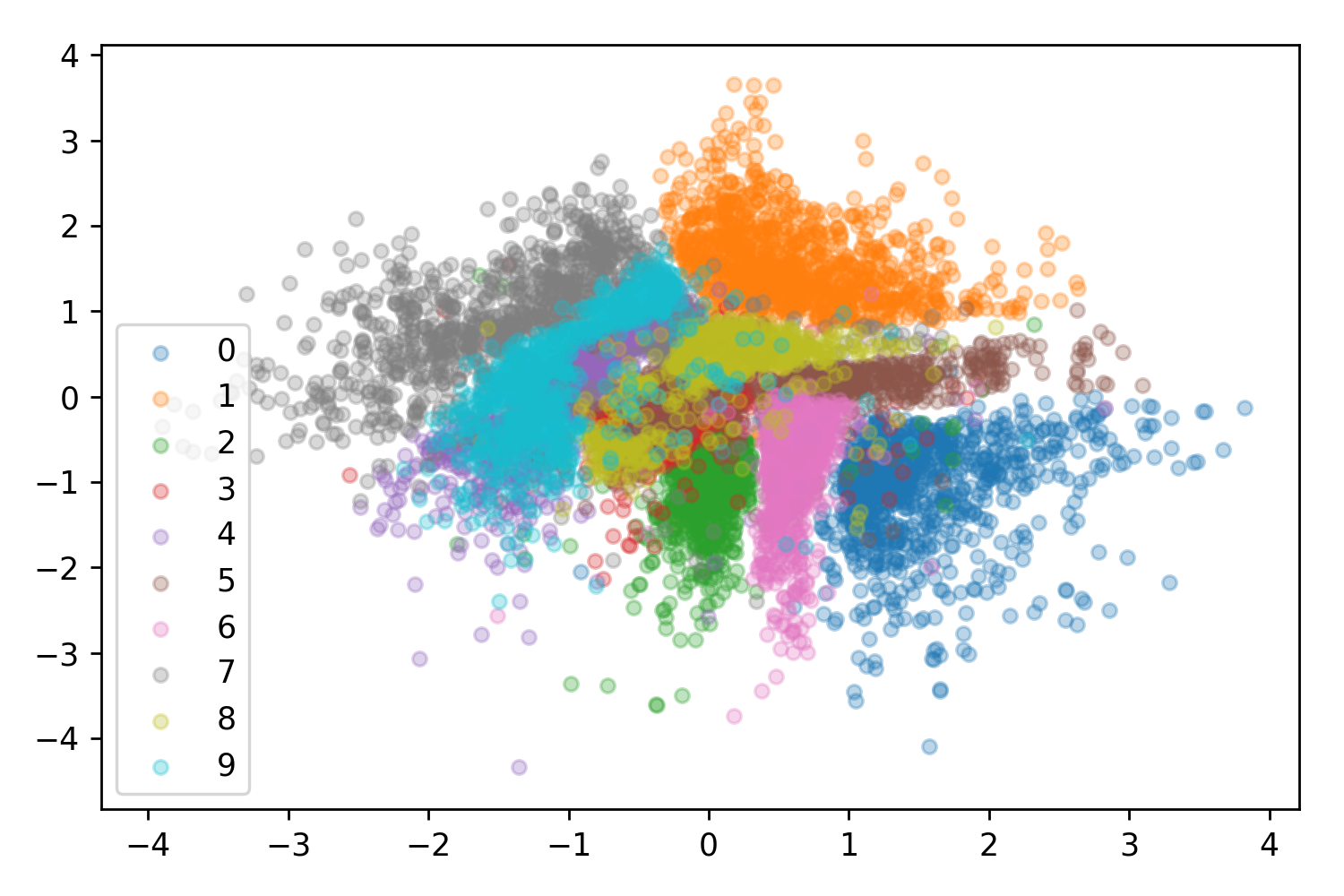}\\\includegraphics[width=0.92\columnwidth]{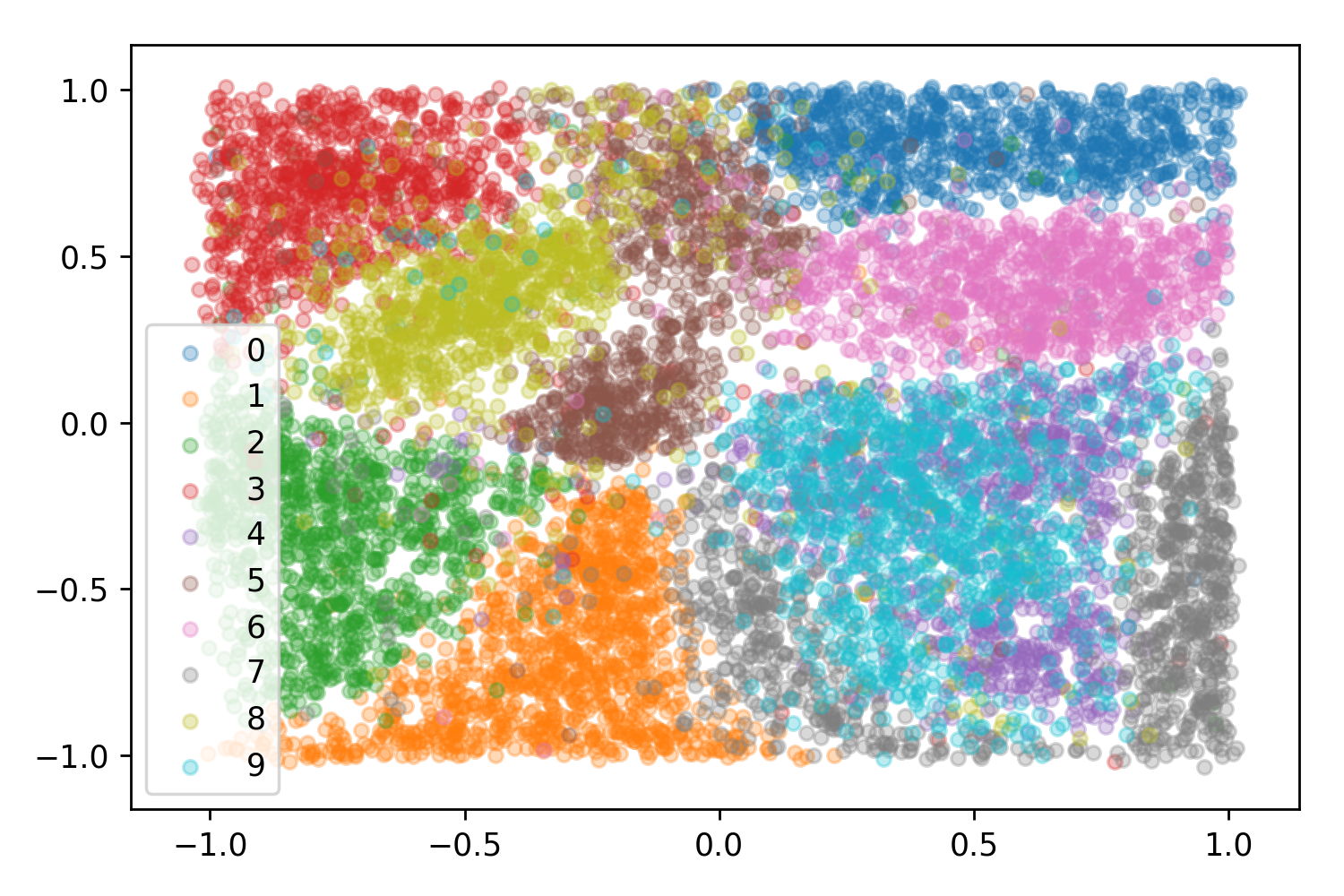}
\caption{Learned 2D manifold on binarized MNIST obtained as proposal means on the test set. {\bf Top:} VAE with Gaussian prior and Gaussian proposal; {\bf Bottom:} DD-VAE with uniform prior and uniform proposal.}  \label{fig:mnist_latent}
\end{center}
\end{figure}

\begin{table*}[h]
\caption{Distribution learning with deterministic decoding on MOSES dataset. We report generative modeling metrics: FCD/Test (lower is better) and SNN/Test (higher is better). Mean $\pm$ std over multiple runs. G = Gaussian proposal, T = Triweight proposal.}
\label{tab:moses_distribution}
\begin{center}
\begin{small}
\begin{sc}
\begin{tabular}{lcccccc}
\toprule
\multirow{2}{*}{Method} & \multicolumn{3}{c}{FCD/Test ($\downarrow$)} & \multicolumn{3}{c}{SNN/Test ($\uparrow$)} \\
\cmidrule(lr){2-4}\cmidrule(l){5-7}
& 70\% & 80\% & 90\% & 70\% & 80\% & 90\% \\
\midrule
VAE (G) & 0.205 {\tiny $\pm$ 0.005} & 0.344 {\tiny $\pm$ 0.003} & 0.772 {\tiny $\pm$ 0.007} & 0.550 {\tiny $\pm$ 0.001}& 0.525 {\tiny $\pm$ 0.001}& 0.488 {\tiny $\pm$ 0.001}\\
VAE (T)  & 0.207 {\tiny $\pm$ 0.004} & 0.335 {\tiny $\pm$ 0.005} & 0.753 {\tiny $\pm$ 0.019} & 0.550 {\tiny $\pm$ 0.001} & 0.526 {\tiny $\pm$ 0.001} & 0.490 {\tiny $\pm$ 0.000} \\
DD-VAE (G) & 0.198 {\tiny $\pm$ 0.012} & 0.312 {\tiny $\pm$ 0.011} & 0.711 {\tiny $\pm$ 0.020} & {\bf 0.555 {\tiny $\pm$ 0.001}} & 0.531 {\tiny $\pm$ 0.001}& 0.494 {\tiny $\pm$ 0.001} \\
DD-VAE (T) & {\bf 0.194 {\tiny $\pm$ 0.001}} & {\bf 0.311 {\tiny $\pm$ 0.010}} & {\bf 0.690 {\tiny $\pm$ 0.010}} & {\bf 0.555 {\tiny $\pm$ 0.000}} & {\bf 0.532 {\tiny $\pm$ 0.001}} & {\bf 0.495 {\tiny $\pm$ 0.001} }\\
\bottomrule
\end{tabular}
\end{sc}
\end{small}
\end{center}
\end{table*}

\subsection{Molecular sets (MOSES)}
In this section, we compare the models on a distribution learning task on MOSES dataset \citep{polykovskiy2018molecular}. MOSES dataset contains approximately $2$ million molecular structures represented as SMILES strings \citep{Weininger1970, Weininger1989}; MOSES also implements multiple metrics, including Similarity to Nearest Neighbor (SNN/Test) and Fr\'echet ChemNet Distance (FCD/Test) \citep{Preuer2018-pf}. SNN/Test is an average Tanimoto similarity of generated molecules to the closest molecule from the test set. Hence, SNN acts as precision and is high if generated molecules lie on the test set's manifold. FCD/Test computes Fr\'echet distance between activations of a penultimate layer of ChemNet for generated and test sets. Lower FCD/Test indicates a closer match of generated and test distributions.

In this experiment, we monitor the model's behavior for high reconstruction accuracy. We trained a $2$-layer GRU encoder and decoder with $512$ neurons and a latent dimension $64$ for both VAE and DD-VAE. We pretrained the models with such $\beta$ that the sequence-wise reconstruction accuracy was approximately $95\%$. We monitored FCD/Test and SNN/Test metrics while gradually increasing $\beta$ until sequence-wise reconstruction accuracy dropped below $70\%$.

In the results reported in Table~\ref{tab:moses_distribution}, DD-VAE outperforms VAE on both metrics. Bounded support proposals have less impact on the target metrics, although they slightly improve both FCD/Test and SNN/Test.

\subsection{Bayesian Optimization}
\begin{table*}[ht]
\caption{Reconstruction accuracy (sequence-wise) and validity of samples on ZINC dataset; Predictive performance of sparse Gaussian processes on ZINC dataset: Log-likelihood (LL) and Root-mean-squared error (RMSE); Scores of top 3 molecules found with Bayesian Optimization. G = Gaussian proposal, T = Tricube proposal.}
\label{tab:rec_val}
\begin{center}
\begin{small}
\begin{sc}
\begin{tabular}{lccccccc}
\toprule
Method & Reconstruction & Validity & LL &  RMSE & top1 &  top2 & top3 \\
\midrule
CVAE  & 44.6\% & 0.7\% & -1.812 $\pm$ 0.004 & 1.504 $\pm$ 0.006 & 1.98 & 1.42 & 1.19 \\
GVAE  & 53.7\% & 7.2\% & -1.739 $\pm$ 0.004 & 1.404 $\pm$ 0.006 & 2.94 & 2.89 & 2.80 \\
SD-VAE  & 76.2\% & 43.5\% & -1.697 $\pm$ 0.015 & 1.366 $\pm$ 0.023 & 4.04 & 3.50 & 2.96 \\
JT-VAE  & 76.7\% & 100.0\% & -1.658 $\pm$ 0.023 & 1.290 $\pm$ 0.026 & 5.30 & 4.93 & 4.49 \\
\midrule
VAE (G) & 87.01\% & 78.32\% & -1.558 $\pm$ 0.019 & 1.273 $\pm$ 0.050 & 5.76 & 5.74 & {\bf 5.67} \\
VAE (T) & 90.3\% & 73.52\% & -1.562 $\pm$ 0.022 & 1.265 $\pm$ 0.051 & 5.41 & 5.38 & 5.35\\
DD-VAE (G) & 89.39\% & 63.07\% & -1.481 $\pm$ 0.020 & 1.199 $\pm$ 0.050  & 5.13 & 4.84 & 4.80 \\
DD-VAE (T) & 89.89\% & 61.38\% & {\bf -1.470 $\pm$ 0.022} & {\bf 1.186 $\pm$ 0.053} & {\bf 5.86} & {\bf 5.77} & 5.64\\
\bottomrule
\end{tabular}
\end{sc}
\end{small}
\end{center}
\end{table*}

\begin{table}[h!t]
\caption{Best molecules found using Bayesian Optimization.}
\label{fig:best_molecules}
\begin{center}
\begin{small}
\begin{sc}
\begin{tabular}{ccc}
\toprule
top1 &  top2 & top3 \\
\midrule
\multicolumn{3}{c}{VAE, Gaussian} \\
\includegraphics[width=60pt]{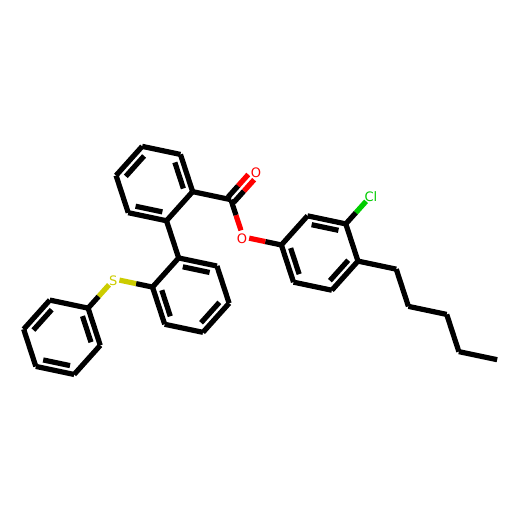} & \includegraphics[width=60pt]{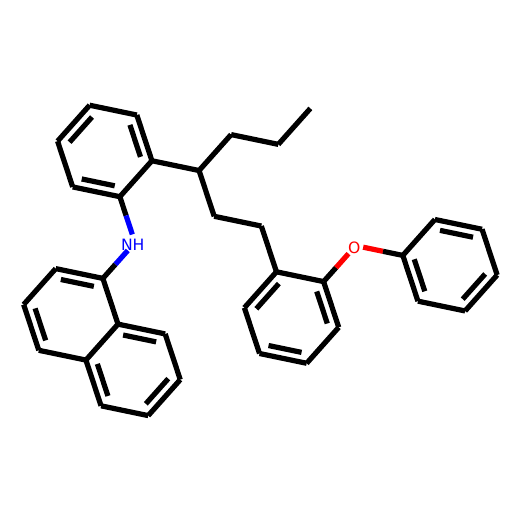} & \includegraphics[width=60pt]{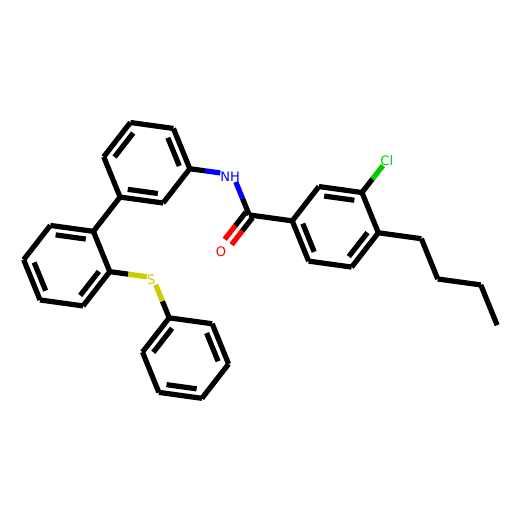}\\
\multicolumn{3}{c}{VAE, Tricube} \\
\includegraphics[width=60pt]{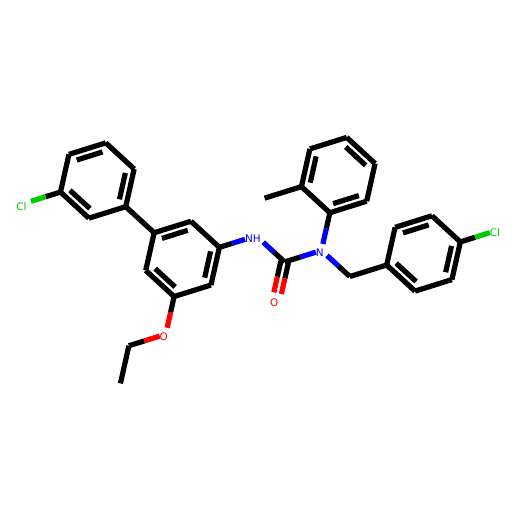} & \includegraphics[width=60pt]{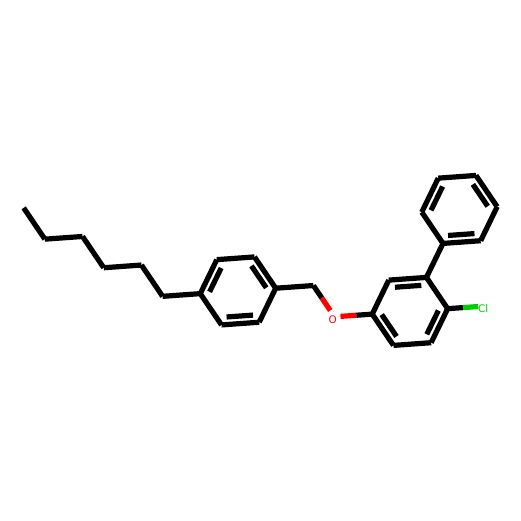} & \includegraphics[width=60pt]{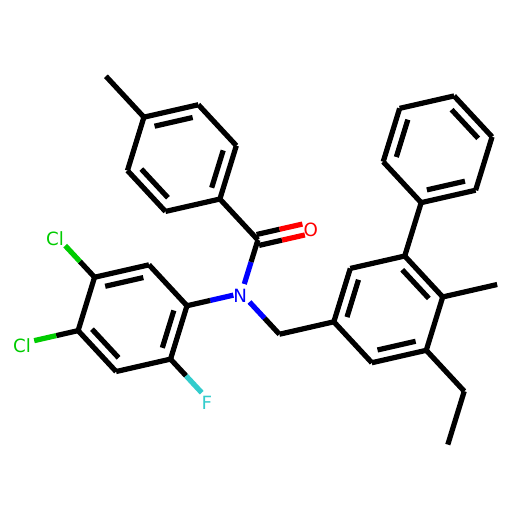}\\
\multicolumn{3}{c}{DD-VAE, Gaussian} \\ \includegraphics[width=60pt]{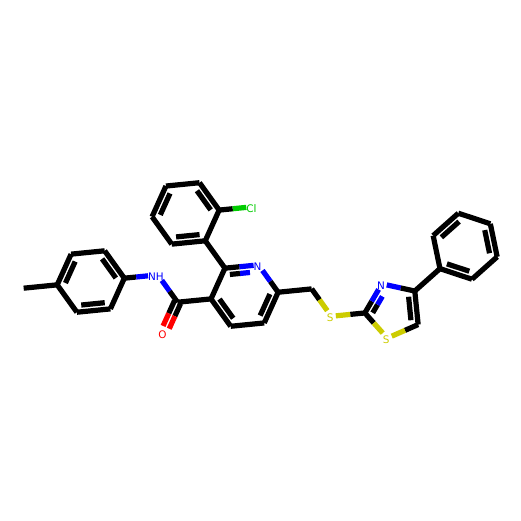} & \includegraphics[width=60pt]{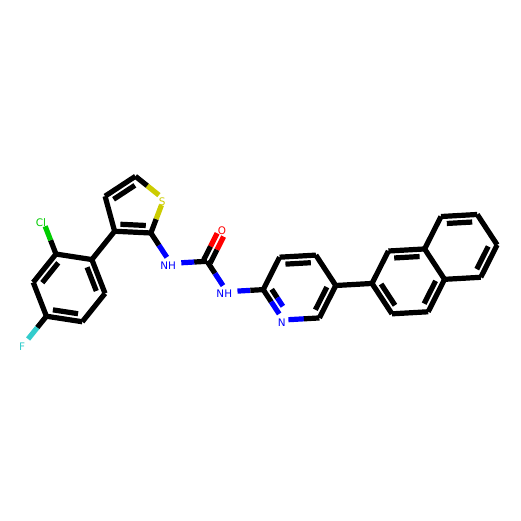} & \includegraphics[width=60pt]{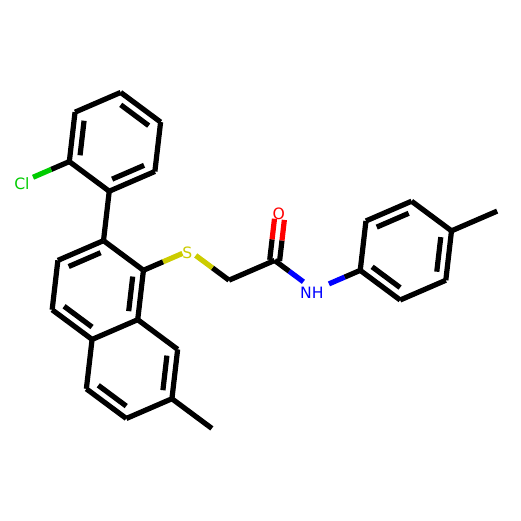}\\
\multicolumn{3}{c}{DD-VAE, Tricube} \\  \includegraphics[width=60pt]{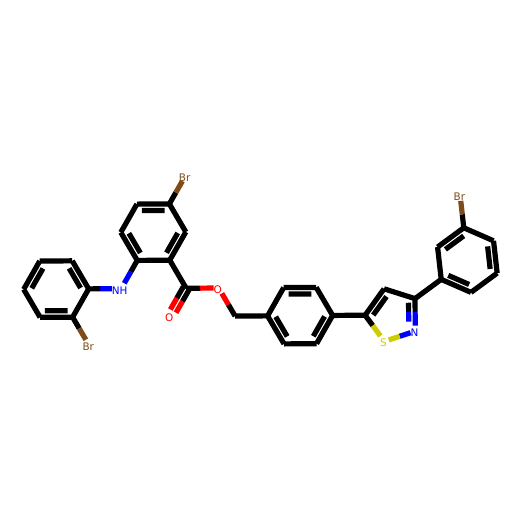} & \includegraphics[width=60pt]{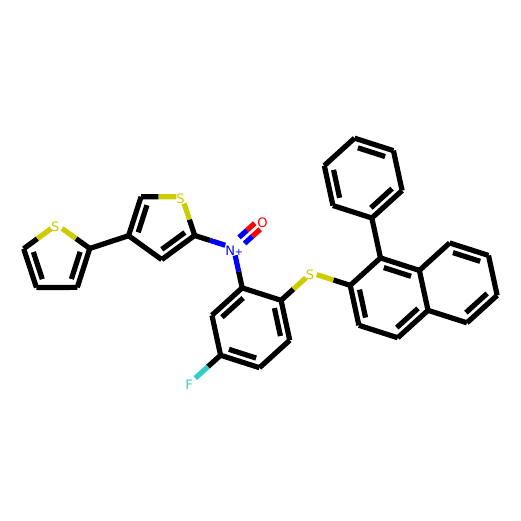} & \includegraphics[width=60pt]{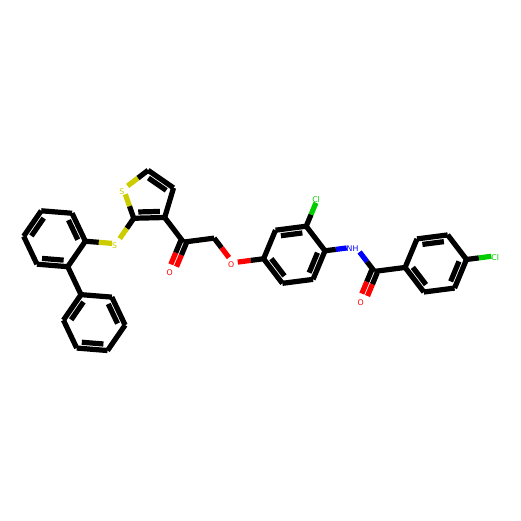}\\
\bottomrule
\end{tabular}
\end{sc}
\end{small}
\end{center}
\end{table}
A standard use case for generative molecular autoencoders for molecules is Bayesian Optimization (BO) of molecular properties on latent codes \citep{gomez2018automatic}. For this experiment, we trained a $1$-layer GRU encoder and decoder with $1024$ neurons on ZINC with latent dimension $64$. We tuned hyperparameters such that the sequence-wise reconstruction accuracy on train set was close to $96\%$ for all our models. The models showed good reconstruction accuracy on test set and good validity of the samples (Table~\ref{tab:rec_val}). We explored the latent space using a standard two-step validation procedure proposed in \citep{kusner2017grammar} to show the advantage of DD-VAE's latent codes. The goal of the Bayesian optimization was to maximize the following score of a molecule $m$:
\begin{equation}
    \textrm{score}(m) = \textrm{logP}(m) - \textrm{SA}(m) - \textrm{cycle}(m),
\end{equation}
where $\textrm{logP}(m)$ is water-octanol partition coefficient of a molecule, $\textrm{SA}(m)$ is a synthetic accessibility score \citep{ertl2009estimation} obtained from RDKit package \citep{landrum2006rdkit}, and $\textrm{cycle}(m)$ penalizes the largest ring $R_{\max}(m)$ in a molecule if it consists of more than 6 atoms:
\begin{equation}
    \textrm{cycle}(m) = \max(0, |R_{\max}(m)| - 6).
\end{equation}
Each component in $\textrm{score}(m)$ is normalized by subtracting mean and dividing by standard deviation estimated on the training set. Validation procedure consists of two steps. First, we train a sparse Gaussian process \citep{snelson2006sparse} on latent codes of DD-VAE trained on approximately $250{,}000$ SMILES strings from ZINC database, and report predictive performance of a Gaussian process on a ten-fold cross validation in Table~\ref{tab:rec_val}. We compare DD-VAE to the following baselines: Character VAE, CVAE \citep{gomez2018automatic}; Grammar VAE, GVAE \citep{kusner2017grammar}; Syntax-Directed VAE, SD-VAE \citep{dai2018syntax}; Junction Tree VAE, JT-VAE \citep{pmlr-v80-jin18a}.

Using a trained sparse Gaussian process, we iteratively sampled $60$ latent codes using expected improvement acquisition function and Kriging Believer Algorithm \citep{cressie1990origins} to select multiple points for the batch. We evaluated selected points and added reconstructed objects to the training set. We repeated training and sampling for 5 iterations and reported molecules with the highest score in Table~\ref{tab:rec_val} and Table~\ref{fig:best_molecules}. We also report top $50$ molecules for our models in Appendix~\ref{sec:best_mols}.

\section{Discussion}
The proposed model outperforms the standard VAE model on multiple downstream tasks, including Bayesian optimization of molecular structures. In the ablation studies, we noticed that models with bounded support show lower validity during sampling. We suggest that it is due to regions of the latent space that are not covered by any proposals: the decoder does not visit these areas during training and can behave unexpectedly there. We found a uniform prior suitable for downstream classification and visualization tasks since latent codes evenly cover the latent space.

DD-VAE introduces an additional hyperparameter $\tau$ that balances reconstruction and $\KL$ terms. Unlike $\KL$ scale $\beta$, temperature $\tau$ changes loss function and its gradients non-linearly. We found it useful to select starting temperatures such that gradients from $\KL$ and reconstruction term have the same scale at the beginning of training. Experimenting with annealing schedules, we found log-linear annealing slightly better than linear annealing.

\subsubsection*{Acknowledgements}
The authors thank Maksim Kuznetsov and Alexander Zhebrak for helpful comments on the paper. Experiments on synthetic data in Section \ref{sec:synthetic} were supported by the Russian Science Foundation grant no.~17-71-20072.

\newpage
\appendix
\section{Proof of Theorem 1 \label{sec:th1 proof}}
We prove the theorem using five lemmas.
\begin{lemma}
$\Lt$ convergences to $\Ls$ pointwise when $\tau$ converges to $0$ from the right:
\begin{equation}
    \forall (\theta, \phi)~~~\lim_{\tau \to 0+} \Lt(\theta, \phi) = \Ls(\theta, \phi)\label{eq:pointwise}
\end{equation}
\end{lemma}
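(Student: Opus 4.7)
The plan is to notice that $\Lt$ and $\Ls$ share the same $-\KLD{\qq(z \mid x)}{p(z)}$ term, so it suffices to prove pointwise convergence of the reconstruction piece
\[
R_\tau(\theta, \phi) := \mathbb{E}_{x \sim p(x)} \mathbb{E}_{z \sim \qq(z \mid x)} \sum_{i=1}^{|x|}\sum_{s \neq x_i} \log \sigma_\tau\!\left(y_{i,s}(x, z)\right),
\]
where $y_{i,s}(x, z) := \pi^{\theta}_{x,i,x_i}(z) - \pi^{\theta}_{x,i,s}(z)$. By the definition of $\widetilde{x}_\theta$ in Eq.~\ref{eq:argmax} together with the ``undefined'' tie-breaking rule, $\widetilde{x}_\theta(z) = x$ if and only if $y_{i,s}(x, z) > 0$ for every $i$ and every $s \neq x_i$. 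Combined with the pointwise convergence $\sigma_\tau(y) \to \Ind{y > 0}$ already stated in the text, this gives pointwise convergence of the integrand in $(x, z)$: on the good event $\{\widetilde{x}_\theta(z) = x\}$ each summand tends to $0$, and on its complement some $y_{i,s} \le 0$ and the corresponding $\log \sigma_\tau$ tends to $-\infty$. In both situations the per-sample integrand matches $\log p(x \mid \widetilde{x}_\theta(z))$ from Eq.~\ref{eq:delta}.

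Next I would split along the dichotomy $(\theta, \phi) \in \Omega$ versus $(\theta, \phi) \notin \Omega$. If $(\theta, \phi) \notin \Omega$, then the bad event $A = \{(x, z) : \widetilde{x}_\theta(z) \neq x\}$ has positive measure under $p(x)\,\qq(z \mid x)$, and the nonnegative integrand $-\sum_{i,s}\log \sigma_\tau$ tends to $+\infty$ on $A$. Fatou's lemma applied on $A$ then gives $\liminf_{\tau \to 0}\mathbb{E}\bigl[-\sum_{i,s}\log \sigma_\tau\bigr] = +\infty$, so $R_\tau \to -\infty$, matching the reconstruction term of $\Ls$. If instead $(\theta, \phi) \in \Omega$, finiteness of $\Ls$ forces $\widetilde{x}_\theta(z) = x$ almost surely under $p(x)\,\qq(z \mid x)$, and on this full-measure set the nonpositive integrand of $\Lt$ converges pointwise to $0$.

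The main obstacle is this last case: there is no single global dominating function at hand, since the naive bound $|\log \sigma_\tau(y)| \le -\log \tau$ valid for $y \ge 0$ blows up as $\tau \downarrow 0$. I would handle it by exploiting a direct monotonicity property of $\sigma_\tau$: differentiating shows that for any fixed $y > 0$ the map $\tau \mapsto \sigma_\tau(y)$ is strictly decreasing on $\bigl(0,\, y/(1+y)\bigr)$. Consequently, for each $z$ in the good event and each $\tau$ below the threshold $y_{\min}(z)/(1 + y_{\min}(z))$, the quantity $|\log \sigma_\tau(y_{i,s}(x, z))|$ is nonincreasing as $\tau \downarrow 0$. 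Stratifying the good event by $\{z : y_{\min}(z) > \epsilon\}$ (on each of which $|\log \sigma_\tau(y_{i,s})|$ admits a uniform-in-$\tau$ bound for sufficiently small $\tau$), applying dominated convergence on each stratum, and exhausting as $\epsilon \to 0$ yields $R_\tau \to 0$, which completes the lemma.
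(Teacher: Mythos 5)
Your overall architecture mirrors the paper's: reduce to the reconstruction term, split on $(\theta,\phi)\in\Omega$ versus $(\theta,\phi)\notin\Omega$, and handle the divergent case by noting that a violated indicator forces $\log\sigma_\tau\le\log\tau\to-\infty$ on a set of positive measure (your Fatou argument is a clean way to make the paper's informal statement of this rigorous). Your monotonicity computation is also correct: $\tau\mapsto\sigma_\tau(y)$ is decreasing for $\tau<y/(1+y)$, so $-\log\sigma_\tau(y)$ shrinks as $\tau\downarrow 0$ once $\tau$ drops below that $y$-dependent threshold.

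The gap is in the final step of the case $(\theta,\phi)\in\Omega$. Proving $\lim_{\tau\to 0}\int_{E_\epsilon}=0$ for each fixed stratum $E_\epsilon=\{z: y_{\min}(z)>\epsilon\}$ and then ``exhausting as $\epsilon\to 0$'' does not give $\lim_{\tau\to 0}\int=0$: you also need $\sup_\tau\int_{E_\epsilon^c}\to 0$ as $\epsilon\to 0$, and the only available bound on the thin stratum $\{0<y_{\min}\le\epsilon\}$ is of order $|V|L\,\log(1/\tau)\cdot P(y_{\min}\le\epsilon)$, which blows up in $\tau$ for every fixed $\epsilon$. The claim genuinely requires quantitative control of how much mass the law of $\delta=\pi^{\theta}_{x,i,x_i}-\pi^{\theta}_{x,i,s}$ places near $0$; for a law with $P(\delta\le t)\sim 1/\log(1/t)$ the expectation $\mathbb{E}\log\sigma_\tau(\delta)$ does not tend to $0$. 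The paper supplies this control by asserting that the induced density of $\delta$ is bounded by a constant $M$ and splitting the range at the moving threshold $\delta_{1/2}=\tau\log(1/\tau-1)$, so that the worst-case bound $\log\tau$ is paid only on a set of measure at most $M\delta_{1/2}$, and $M\delta_{1/2}\log\tau\to 0$. If you import that same bounded-density hypothesis, your monotonicity observation in fact yields a cleaner finish than the paper's three-segment estimate: $\sup_{\tau\in(0,1)}\left(-\log\sigma_\tau(y)\right)=\log\!\left(1+e^{-(1+y)}/y\right)$, attained at $\tau=y/(1+y)$, is integrable against any bounded density on $(0,1]$, so dominated convergence applies in one shot. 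As written, however, the stratify-then-exhaust step is exactly where the proof is incomplete.
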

\begin{proof}
To prove Eq.~\ref{eq:pointwise}, we first show that our approximation in Eq.10 from the main paper converges pointwise to $\Ind{x > 0}$. $\forall x \in \mathbb{R}$:
\begin{equation}
    \lim_{\tau \to 0+}\sigma_{\tau}(x) = \lim_{\tau \to 0+}\frac{1}{1 + e^{-x / \tau}\left[\frac{1}{\tau} - 1\right]} = \Ind{x > 0}
\end{equation}
If $x$ is negative, both $e^{-x/\tau}$ and $1/\tau$ converge to $+\infty$, hence $\sigma_{\tau}(x)$ converges to zero. If $x$ is zero, then $\sigma_{\tau}(x) = \tau$ which also converges to zero. Finally, for positive $x$ we apply L'Hôpital's rule to compute the limit:
\begin{equation}
    \lim_{\tau \to 0+} \frac{e^{-x/\tau}}{\tau} = \lim_{\tau \to 0+} \frac{\left(1/\tau\right)'}{\left(e^{x/\tau}\right)'} = \lim_{\tau \to 0+}\frac{e^{-x/\tau}}{x} = 1
\end{equation}
To prove the theorem, we consider two cases. First, if $(\theta, \phi) \notin \Omega$, then for some $x$, $i$, and $x \neq s$,
\begin{equation}
    \mathbb{E}_{z \sim \qq(z \mid x)}\Ind{\widetilde{\pi}^{\theta}_{x, i, x_i}(z) \leq \widetilde{\pi}^{\theta}_{x, i, s}(z)} > 0.
\end{equation}
From the equation above follows that for given parameters the model violates indicators with positive probability. For those $z$, a smoothed indicator function takes values less than $\tau$, so the expectation of its logarithm tends to $-\infty$ when $\tau \to 0+$.

The second case is $(\theta, \phi) \in \Omega$. Since $\Ls(\theta, \phi) > -\infty$, indicators are violated only with probability zero, which will not contribute to the loss neither in $\Ls$, nor in $\Lt$. For all $x$, $i$ and $s$, consider a distribution of a random variable $\delta = \widetilde{\pi}^{\theta}_{x, i, x_i}(z) - \widetilde{\pi}^{\theta}_{x, i, s}(z)$ obtained from a distribution $\qq(z \mid x)$. Let $\delta_{\max} \le 1$ be the maximal value of $\delta$.
We now need to prove that 
\begin{equation}
    \lim_{\tau \to 0+}\mathbb{E}_{\delta \sim p(\delta)}\log \sigma_{\tau}(\delta) = 0
\end{equation}
For any $\epsilon > 0$, we select $\delta_0 > 0$ such that $p(\delta < \delta_0) < \epsilon$. For the next step we will use the fact that $\sigma_{\tau}\left(\delta_{1/2}\right) = 0.5$, where $\delta_{1/2} = \tau\log\left(\frac{1}{\tau}-1\right)$. By selecting $\tau$ small enough such that $\delta_{1/2} < \delta_0$, we split the integration limit for $\delta$ in expectation into three segments: $(0, \delta_{1/2}]$, $(\delta_{1/2}, \delta_0]$, $(\delta_0, \delta_{\max})$. A lower bound on $\log \sigma_{\tau}(\delta)$ in each segment is given by its value in the left end: $\log\tau$, $\log 1/2$, $\log\sigma_{\tau}(\delta_0)$. Also, since $p(\delta \leq 0) = 0$ and $\delta$ is continuous on compact support of $\qq(z \mid x)$, density $p(\delta)$ is bounded by some constant $M$. Such estimation gives us the final lower bound using pointwise convergence of $\sigma_{\tau}(\delta)$:
\begin{equation}
\begin{split}
    0 \ge \mathbb{E}&_{\delta \sim p(\delta)}\log \sigma_{\tau}(\delta) \ge \\
    & M \cdot \underbrace{\log\tau \cdot \delta_{1/2}}_{\lim_{\tau \to 0+} \dots = 0} + \epsilon \cdot \log{1/2} \\ 
    & + M \cdot \underbrace{\log\sigma_{\tau}(\delta_0)}_{\lim_{\tau \to 0+}\dots = 0} \cdot (\delta_{\max} - \delta) \rightarrow_{\tau \to 0+} \epsilon \cdot  \log 1/2.
\end{split}
\end{equation}
We used $\lim_{\tau \to 0+}\log\tau \cdot \delta_{1/2}=0$ which can be proved by applying the L'Hôpital's rule twice.
\end{proof}

\begin{proposition} \label{prop:1}
For our model, $\Ls$ is finite if and only if a sequence-wise reconstruction error rate is zero: 
\begin{equation}
    (\theta, \phi) \in \Omega \Leftrightarrow \Delta(\widetilde{x}_{\theta}, \phi) = 0
\end{equation}
\end{proposition}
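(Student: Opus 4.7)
The plan is to decompose $\Ls(\theta, \phi)$ into its reconstruction and KL parts, argue that the KL part is always finite under the bounded-support construction, and then show the reconstruction part takes only the values $0$ or $-\infty$, with the dichotomy governed exactly by whether $\Delta(\widetilde{x}_{\theta}, \phi) = 0$.

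First I would observe that the KL term is finite. By the construction in Section~\ref{sec:bounded_support}, the support of $\qq(z \mid x)$ lies inside the support of $p(z)$ (automatic for a Gaussian prior, enforced by the $\mu',\sigma'$ transformation in the uniform case), so the closed-form expressions in Tables~\ref{tab:kl_finite} and~\ref{tab:kl_uniform} yield finite values for any finite $\mu$ and $\sigma > 0$ produced by the encoder. Hence $\mathbb{E}_{x \sim p(x)}\KLD{\qq(z \mid x)}{p(z)}$ is finite, and $\Ls(\theta, \phi) > -\infty$ holds if and only if the reconstruction term
\[
R(\theta, \phi) = \mathbb{E}_{x \sim p(x)}\mathbb{E}_{z \sim \qq(z \mid x)}\log p\bigl(x \mid \widetilde{x}_{\theta}(z)\bigr)
\]
is finite.

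Next I would analyze $R(\theta, \phi)$. By Eq.~\ref{eq:delta} the integrand equals $0$ when $\widetilde{x}_{\theta}(z) = x$ and $-\infty$ otherwise, so it is a two-valued function. Let $A = \{(x, z) : \widetilde{x}_{\theta}(z) \neq x\}$ with respect to the joint measure $p(x)\qq(z \mid x)$, and use the standard convention $0 \cdot (-\infty) = 0$. Then $R(\theta, \phi) = 0$ when $\mathbb{P}(A) = 0$, while $R(\theta, \phi) = -\infty$ whenever $\mathbb{P}(A) > 0$. Since $\mathbb{P}(A)$ equals $\Delta(\widetilde{x}_{\theta}, \phi)$ by Eq.~\ref{eq:defdelta}, both directions of the claimed equivalence follow immediately: $(\theta, \phi) \in \Omega$ iff $R(\theta, \phi) = 0$ iff $\Delta(\widetilde{x}_{\theta}, \phi) = 0$.

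The only subtle point is the measure-theoretic observation that a $-\infty$ integrand contributes nothing on a null set; this is exactly why requiring $\Delta = 0$ (rather than some stronger everywhere-pointwise condition on $\widetilde{x}_{\theta}$) is precisely what makes $R$, and hence $\Ls$, finite. This step is entirely standard, and the rest is bookkeeping.
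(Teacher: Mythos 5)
Your proof is correct. The paper in fact states Proposition~\ref{prop:1} without supplying any proof, treating it as immediate from the definitions, and your argument --- the KL term is finite for any encoder output with $\sigma>0$ over the finite set $\chi$, while the reconstruction integrand is two-valued ($0$ or $-\infty$) so the expectation is finite exactly when the mismatch set $\{(x,z):\widetilde{x}_{\theta}(z)\neq x\}$ is null, i.e.\ exactly when $\Delta(\widetilde{x}_{\theta},\phi)=0$ --- is precisely the intended justification.
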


\begin{lemma}
Sequence-wise reconstruction error rate $\Delta(\phi)$ is continuous.
\end{lemma}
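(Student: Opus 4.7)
The plan is to reduce continuity of $\Delta(\phi)$ to the equicontinuity hypothesis on $\qq(z\mid x)$ by first eliminating the (potentially discontinuous) argmax $\widetilde{x}^*_{\phi}$ from the expression. Substituting the definition of the optimal decoder into Eq.~\ref{eq:def_delta_phi} collapses the numerator to a pointwise maximum:
\begin{equation*}
\Delta(\phi) \;=\; 1 - \int \max_{x \in \chi} p(x)\,\qq(z\mid x)\,dz,
\end{equation*}
which is well defined since the integrand is dominated by $\sum_{x} p(x)\qq(z\mid x)$, whose integral equals $1$.

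Next I would bound $|\Delta(\phi) - \Delta(\phi')|$ using the elementary pointwise inequality $|\max_x a_x - \max_x b_x| \le \max_x |a_x - b_x| \le \sum_x |a_x - b_x|$ with $a_x = p(x)\qq(z\mid x)$ and $b_x = p(x)q_{\phi'}(z\mid x)$. Integrating in $z$ and swapping sum and integral (valid by Tonelli since all summands are nonnegative) yields
\begin{equation*}
|\Delta(\phi) - \Delta(\phi')| \;\le\; \sum_{x \in \chi} p(x)\int \bigl|\qq(z\mid x) - q_{\phi'}(z\mid x)\bigr|\,dz.
\end{equation*}
Each integral on the right is precisely the quantity controlled by the equicontinuity assumption, so it can be made arbitrarily small for $\phi'$ close to $\phi$.

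The only subtle step is that the bound $\delta(\epsilon,x,\phi)$ furnished by the equicontinuity hypothesis depends on $x$, so a priori we cannot take a minimum of uncountably many such bounds. This is where the theorem's hypothesis that sequence length is bounded by $L$ enters: combined with a finite vocabulary $V$, it forces $|\chi| \le |V|^L < \infty$, so $\delta^{*} = \min_{x \in \chi}\delta(\epsilon, x, \phi)$ is a minimum of finitely many positive numbers and is therefore strictly positive. For $\|\phi - \phi'\| < \delta^{*}$ each integral is at most $\epsilon$, and since $\sum_{x \in \chi} p(x) \le 1$ the weighted sum is bounded by $\epsilon$, establishing continuity at $\phi$. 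The rest of the argument is routine inequality chasing; this finiteness-of-$\chi$ observation is the one place where a careless application of the assumption would fail.
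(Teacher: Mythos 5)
Your proof is correct, and it takes a slightly different route from the paper's. The paper keeps the optimal decoder in play and uses an add-and-subtract decomposition
$\Delta(\phi)-\Delta(\phi') = \bigl[\Delta(\widetilde{x}^*_{\phi},\phi)-\Delta(\widetilde{x}^*_{\phi'},\phi)\bigr] + \bigl[\Delta(\widetilde{x}^*_{\phi'},\phi)-\Delta(\widetilde{x}^*_{\phi'},\phi')\bigr]$, killing the first bracket by optimality of $\widetilde{x}^*_{\phi}$ and bounding the second by the total-variation distance, then invoking symmetry for the reverse inequality. You instead eliminate the decoder entirely by writing $\Delta(\phi)=1-\int\max_{x\in\chi}p(x)\qq(z\mid x)\,dz$ and applying the $1$-Lipschitz property of the pointwise maximum, which gives the two-sided bound in a single step. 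The two arguments are really two faces of the same "continuity of a supremum" fact, and both reduce to the same quantity $\int|\qq(z\mid x)-q_{\phi'}(z\mid x)|\,dz$; yours is marginally cleaner in that it avoids the symmetry step, and it is more explicit about the one place where care is needed — the $\delta$ in the equicontinuity hypothesis depends on $x$, so the uniform choice $\delta^*=\min_{x\in\chi}\delta(\epsilon,x,\phi)>0$ requires $|\chi|\le|V|^L<\infty$. The paper invokes this same finiteness of $\chi$ but does not spell out why it is needed, so your elaboration is a genuine (if small) improvement in rigor. One cosmetic point: your identity $\Delta(\phi)=1-\int\max_x p(x)\qq(z\mid x)\,dz$ implicitly cancels $p(z)$ in Eq.~\ref{eq:def_delta_phi}, which is harmless here because the supports of the proposals are contained in the support of the prior, but is worth a parenthetical remark.
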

\begin{proof}
Following equicontinuity in total variation of $\qq(z \mid x)$ at $\phi$ for any $x$ and finiteness of $\chi$, for any $\epsilon > 0$ there exists $\delta > 0$ such that for any $x \in \chi$ and any $\phi'$ such that $\|\phi - \phi'\| < \delta$
\begin{equation}
    \int \left|\qq(z | x) - q_{\phi'}(z | x)\right| dz < \epsilon.
\end{equation}
For parameters $\phi$ and $\phi'$, we estimate the difference in $\Delta$ function values
\begin{equation}
\begin{split}
    \Delta(\phi) & - \Delta(\phi') \\ & = \underbrace{\Delta(\widetilde{x}^*_{\phi}, \phi) - \Delta(\widetilde{x}^*_{\phi'}, \phi)}_{\le 0} + \Delta(\widetilde{x}^*_{\phi'}, \phi) - \Delta(\widetilde{x}^*_{\phi'}, \phi') \\
    & \le \mathbb{E}_{x \sim p(x)}\underbrace{\int\left(\qq(z | x) - q_{\phi'}(z | x)\right)\Ind{\widetilde{x}^*_{\phi'}(z) \neq x}dz}_{< \epsilon}  \\ & \le \epsilon
\end{split}
\end{equation}
Symmetrically, $\Delta(\phi') - \Delta(\phi) \le \epsilon$, resulting in $\Delta(\phi)$ being continuous.
\end{proof}

\begin{lemma}
\label{lemma:3}
Sequence-wise reconstruction error rate $\Delta(\phi_n)$ converges to zero:
\begin{equation}
\lim_{n \to +\infty}\Delta(\phi_n) = \Delta(\widetilde{\phi}) = 0.
\end{equation}
The convergence rate is $\mathcal{O}(\frac{1}{\log(1/\tau_n)})$.
\end{lemma}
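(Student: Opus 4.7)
The plan is to use the optimality of $(\theta_n,\phi_n)$ against a fixed reference point in $\Omega$, together with a crude upper bound on the smoothed reconstruction sum, to force $\Delta(\widetilde{x}_{\theta_n},\phi_n)$ to vanish at rate $1/\log(1/\tau_n)$. Continuity of $\Delta$ (Lemma~2) then transports this to the limit $\widetilde{\phi}$.

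First I would pick any $(\theta^*,\phi^*)\in\Omega$, which exists by assumption. By Proposition~\ref{prop:1}, $\Delta(\widetilde{x}_{\theta^*},\phi^*)=0$, so $\Ls(\theta^*,\phi^*)=C$ is a finite (nonpositive) constant. By Lemma~1, $\Lt(\theta^*,\phi^*)\to C$ as $\tau\to 0^+$, hence for all sufficiently large $n$, $\Lo_{\tau_n}(\theta^*,\phi^*)\ge C-1$. The defining optimality of $(\theta_n,\phi_n)$ then gives $\Lo_{\tau_n}(\theta_n,\phi_n)\ge C-1$.

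Next comes the key estimate. Whenever $\widetilde{x}_{\theta_n}(z)\ne x$, there exists at least one pair $(i,s)$ with $s\ne x_i$ such that $\delta:=\pi^{\theta_n}_{x,i,x_i}(z)-\pi^{\theta_n}_{x,i,s}(z)\le 0$. Since $\sigma_{\tau_n}$ is increasing and $\sigma_{\tau_n}(0)=\tau_n$, that single term contributes at most $\log\tau_n$; all other summands are $\le 0$ because $\sigma_{\tau_n}\le 1$. Integrating over $x\sim p(x)$ and $z\sim\qq(z\mid x)$, the reconstruction part of $\Lo_{\tau_n}(\theta_n,\phi_n)$ is bounded above by $\Delta(\widetilde{x}_{\theta_n},\phi_n)\cdot\log\tau_n$. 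Discarding the nonnegative KL term yields
\begin{equation*}
C-1 \;\le\; \Lo_{\tau_n}(\theta_n,\phi_n) \;\le\; \Delta(\widetilde{x}_{\theta_n},\phi_n)\cdot\log\tau_n,
\end{equation*}
and since $\log\tau_n<0$ this rearranges to $\Delta(\widetilde{x}_{\theta_n},\phi_n)\le (1-C)/\log(1/\tau_n)=\mathcal{O}\!\left(1/\log(1/\tau_n)\right)$. The optimal-decoder inequality $\Delta(\phi_n)\le\Delta(\widetilde{x}_{\theta_n},\phi_n)$ from Eq.~\ref{eq:def_delta_phi} passes the same rate to $\Delta(\phi_n)$. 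Finally, continuity of $\Delta$ (Lemma~2) combined with $\phi_n\to\widetilde{\phi}$ gives $\Delta(\widetilde{\phi})=\lim_n\Delta(\phi_n)=0$.

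The main obstacle I expect is resisting the temptation to control the KL term from above: under uniform or bounded-support proposals $\mathbb{E}_x\KLD{q_{\phi_n}(z\mid x)}{p(z)}$ can grow without bound as $\sigma\to 0$, so no two-sided control is available. The argument must use only the one-sided bound KL$\,\ge 0$, and exploit that the failure event is penalized by a factor of $\log\tau_n\to-\infty$ which dominates everything else. A secondary subtlety is that the $\log\sigma_{\tau_n}$ bound on the error set relies on monotonicity of $\sigma_{\tau_n}$ and the identity $\sigma_{\tau_n}(0)=\tau_n$, both immediate from the closed form in Eq.~\ref{eq:approximate_ind}; once these are in hand, the rest is rearrangement and an appeal to Lemma~2.
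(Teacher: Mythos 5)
Your proof is correct and follows essentially the same route as the paper's: sandwich $\Lo_{\tau_n}(\theta_n,\phi_n)$ between the lower bound from optimality against a fixed point of $\Omega$ (via pointwise convergence of $\Lt$) and the upper bound $\Delta(\widetilde{x}_{\theta_n},\phi_n)\log\tau_n$ obtained by charging $\log\tau_n$ to at least one violated indicator while discarding the nonpositive terms and the $-\KL$ term, then rearrange and invoke continuity of $\Delta$. Your version is in fact slightly cleaner than the paper's, which inserts an unnecessary (and strictly speaking incorrectly directed) factor of $|V|L$ in the upper bound; this does not affect the $\mathcal{O}(1/\log(1/\tau_n))$ rate in either case.
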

\begin{proof}
Since $\Omega$ is not empty, there exists $(\widehat{\theta}, \widehat{\phi}) \in \Omega$. 
From pointwise convergence of $\Lt$ to $\Ls$ at point $(\widehat{\theta}, \widehat{\phi})$, for any $\epsilon > 0$ exists $N$ such that for any $n > N$:
\begin{equation}
    \underbrace{\Lo_{\tau_n}(\theta_n, \phi_n) \ge \Lo_{\tau_n}(\widehat{\theta}, \widehat{\phi})}_{\textrm{from the definition of } (\theta_n, \phi_n)} \ge \Ls(\widehat{\theta}, \widehat{\phi}) - \epsilon. \label{eq:pcw}
\end{equation}
Next, we derive an upper bound on $\Lo_{\tau_n}(\theta_n, \phi_n)$ using the fact that $\log\sigma_{\tau}(x) < 0$ if $x > 0$, and $\log\sigma_{\tau}(x) \le \log\tau_n$ if $x \le 0$:
\begin{equation}
\begin{split}
    \Lo_{\tau_n} (\theta_n, \phi_n) & \le \mathbb{E}_{x \sim p(x)}\bigg[\mathbb{E}_{z \sim \qq(z \mid x)}\sum_{i=1}^{|x|}\sum_{s \neq x_i} \log \tau_n \cdot \\
    & \Ind{\pi_{x, i, x_i}(z) \le \pi_{x, i, s}(z)} \underbrace{- \KL{\qq(z \mid x)}{p(z)}}_{\le 0}\bigg] \\
    & \le |V|L\cdot \log\tau_n \cdot \Delta(\widetilde{x}_{\theta_n}, \phi_n).
\end{split} \label{eq:delta_estimate}
\end{equation}
Combining Eq.~\ref{eq:pcw} and Eq.~\ref{eq:delta_estimate} together we get
\begin{equation}
    |V|L\cdot \underbrace{\log \tau_n}_{< 0} \cdot \Delta(\widetilde{x}_{\theta_n}, \phi_n) \ge \Ls(\theta^*, \phi^*) - \epsilon
\end{equation}
Adding the defintion of $\Delta(\phi)$, we obtain
\begin{equation}
    0 \le \Delta(\phi_n) \le \Delta(\widetilde{x}_{\theta_n}, \phi_n) \le  \frac{\epsilon - \Ls(\theta^*, \phi^*)}{|V|L\cdot\log(1/\tau_n)}
\end{equation}
The right hand side goes to zero when $n$ goes to infinity and hence $\lim_{n \to +\infty}\Delta(\widetilde{x}_{\theta_n}, \phi_n) = 0$ and $\lim_{n \to +\infty}\Delta(\phi_n) = 0$ with the convergence rate $\mathcal{O}(\frac{1}{\log(1/\tau_n)})$. Since $\Delta(\phi_n)$ is continuous, $\Delta(\widetilde{\phi}) = 0$.
\end{proof}

\begin{lemma}
$\Ls(\theta, \phi)$ attains its supremum:
\begin{equation}
    \exists \theta^* \in \Theta, \phi^* \in \Phi: \Ls(\theta^*, \phi^*) = \sup_{\theta \in \Theta, \phi \in \Phi}\Ls(\theta, \phi).
\end{equation}
\end{lemma}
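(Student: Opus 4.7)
The plan is to exploit the fact that on the finite-value set $\Omega$, the loss $\Ls$ collapses to a single $\phi$-dependent term: whenever $(\theta, \phi) \in \Omega$, by Proposition~\ref{prop:1} the decoder reconstructs correctly almost surely under $\qq(z \mid x)$, so $\log p(x \mid \widetilde{x}_{\theta}(z)) = 0$ a.s., and hence
\[
\Ls(\theta, \phi) \;=\; -\,\mathbb{E}_{x \sim p(x)}\KLD{\qq(z \mid x)}{p(z)} \;=:\; F(\phi).
\]
Since $\Omega \neq \emptyset$ and $\Ls \le 0$ everywhere, $S := \sup_{\theta \in \Theta,\phi \in \Phi}\Ls(\theta, \phi)$ is finite.

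I would then pick a maximizing sequence $(\theta_n, \phi_n)$ with $\Ls(\theta_n, \phi_n) \to S$; such a sequence eventually lies in $\Omega$ since $S > -\infty$. Compactness of $\Phi$ lets me pass to a subsequence along which $\phi_n \to \phi^* \in \Phi$. From $\Delta(\widetilde{x}_{\theta_n}, \phi_n) = 0$ we obtain $\Delta(\phi_n) = 0$, and continuity of $\Delta$ (Lemma~2) then yields $\Delta(\phi^*) = 0$. The theorem's hypothesis produces $\theta^* \in \Theta$ with $\Delta(\widetilde{x}_{\theta^*}, \phi^*) = 0$, so $(\theta^*, \phi^*) \in \Omega$ and $\Ls(\theta^*, \phi^*) = F(\phi^*)$.

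It remains to show $F(\phi^*) \ge S = \lim_n F(\phi_n)$, i.e., upper semi-continuity of $F$ at $\phi^*$. The equicontinuity hypothesis gives $q_{\phi_n}(z \mid x) \to q_{\phi^*}(z \mid x)$ in total variation for every $x$; TV convergence implies weak convergence, and $\KLD{\cdot}{p(z)}$ is lower semi-continuous in its first argument under weak convergence (a standard result). Hence for each $x$,
\[
\KLD{q_{\phi^*}(z \mid x)}{p(z)} \;\le\; \liminf_n \KLD{q_{\phi_n}(z \mid x)}{p(z)},
\]
and Fatou's lemma applied to the expectation over $p(x)$ gives $F(\phi^*) \ge \limsup_n F(\phi_n) = S$. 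Combined with $F(\phi^*) \le S$, this yields $\Ls(\theta^*, \phi^*) = S$, completing the argument.

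The main subtle point will be the invocation of lower semi-continuity of $\KL$ under TV convergence; in particular, one should rule out a pathology in which $\sigma^{\phi_n}_i(x) \to 0$ along the sequence, which would force the $-\log \sigma$ entries in Table~\ref{tab:kl_finite} to $+\infty$. But any such degeneracy would push $F(\phi_n) \to -\infty$, contradicting $F(\phi_n) \to S > -\infty$, so a maximizing sequence is automatically bounded away from this regime. Everything else reduces to standard measure-theoretic bookkeeping, and compactness of $\Theta$ is not actually needed because the optimal $\theta^*$ is produced directly from the theorem's hypothesis rather than as a limit of $\theta_n$.
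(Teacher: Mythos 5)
Your proof is correct and shares the paper's overall skeleton: both arguments reduce the problem to maximizing $F(\phi) = -\mathbb{E}_{x \sim p(x)}\KLD{\qq(z \mid x)}{p(z)}$ over $\Phi_0 = \{\phi : \Delta(\phi) = 0\}$, using Proposition~\ref{prop:1} to identify membership in $\Omega$ with zero reconstruction error, continuity of $\Delta$ to make $\Phi_0$ closed in the compact $\Phi$, and the theorem's hypothesis to supply a decoder $\theta(\phi)$ for each $\phi \in \Phi_0$; neither argument needs compactness of $\Theta$. The genuine difference is the final analytic step. The paper simply asserts that $\Ls$ restricted to $\Omega$ equals $-\KL(\phi)$ and is \emph{continuous}, then invokes Weierstrass on $\Phi_0$; but continuity of $\phi \mapsto \KLD{\qq(z\mid x)}{p(z)}$ does not follow from the stated equicontinuity-in-total-variation hypothesis (TV convergence does not imply convergence of KL divergences), so that step rests on an unstated regularity assumption (which does hold for the parametric families in Table~\ref{tab:kl_finite} when $\mu,\sigma$ are continuous in $\phi$ and bounded away from degeneracy). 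You instead extract a convergent maximizing subsequence and use only \emph{upper} semicontinuity of $F$, derived from lower semicontinuity of $\KLD{\cdot}{p(z)}$ under weak convergence — a standard fact — which the TV-equicontinuity hypothesis does deliver; Fatou's lemma is harmless overkill since $\chi$ is finite. Your route therefore proves the same statement from the theorem's assumptions as literally stated, at the cost of a slightly longer subsequence argument, whereas the paper's shorter version leans on a continuity claim it also reuses, again without proof, in the final lemma.
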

\begin{proof}
From Lemma~\ref{lemma:3}, $\Delta(\widetilde{\phi}) = 0$. Hence, for a choice of $\widetilde{\theta}$ from the theorem statement, $\Delta(\widetilde{\theta}, \widetilde{\phi}) = 0$. Equivalently, $(\widetilde{\theta}, \widetilde{\phi}) \in \Omega$.

Note that since $\Delta(\phi) \ge 0$ is continuous on a compact set, $\Phi_0 = \{\phi \mid \Delta(\phi)=0\}$ is a compact set. Also, $\Ls(\theta, \phi)$ is constant with respect to $\theta$ on $\Omega$. From the theorem statement, for any $\phi$ such that $\Delta(\phi) = 0$, there exists  $\theta(\phi)$ such that $(\theta(\phi), \phi) \in \Omega$. Combining all statements together,
\begin{equation}
\sup_{\phi \in \Phi_0}\Ls(\theta(\phi), \phi) = \sup_{\theta \in \Theta, \phi \in \Phi}\Ls(\theta, \phi)
\end{equation}
In $\Omega$, $\Ls$ is a continuous function: $\forall(\theta, \phi) \in \Omega$,
\begin{equation}
    \Ls(\theta, \phi) = -\KL(\phi) = -\mathbb{E}_{x \sim p(x)}\KLD{\qq(z | x)}{p(z)}
\end{equation}
Hence, continuous function $\Ls(\theta(\phi), \phi)$ attains its supremum on a compact set $\Phi$ at some point $(\theta^*, \phi^*)$, where $\theta^* = \theta(\phi^*)$. 
\end{proof}

\begin{lemma}
Parameters $(\widetilde{\theta}, \widetilde{\phi})$ from theorem statement are optimal:
\begin{equation}
    \Ls(\widetilde{\theta}, \widetilde{\phi}) = \sup_{\theta \in \Theta, \phi \in \Phi}\Ls(\theta, \phi).
\end{equation}
\end{lemma}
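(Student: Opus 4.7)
The strategy is to sandwich the values $\Lo_{\tau_n}(\theta_n, \phi_n)$ between a lower bound that tends to $\Ls(\theta^*, \phi^*)$ and an upper bound that tends to $\Ls(\widetilde{\theta}, \widetilde{\phi})$, where $(\theta^*, \phi^*)$ is the supremum-attaining pair produced by Lemma~5. Since the sequence $\{(\theta_n, \phi_n)\}$ maximizes the relaxed objective, every comparison point gives a lower bound; and since the relaxed reconstruction term is pointwise nonpositive, the $\KL$ term gives an upper bound. Pointwise convergence from Lemma~1 handles the lower side, while a (lower semi-)continuity argument for the $\KL$ divergence handles the upper side.

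\textbf{Step 1: Set-up.} First, apply Lemma~3 to conclude $\Delta(\widetilde{\phi}) = 0$. Then the theorem's hypothesis that for any $\phi$ with $\Delta(\phi)=0$ there is some $\theta$ with $\Delta(\widetilde{x}_{\theta}, \phi)=0$, together with Proposition~1, yields $(\theta, \widetilde{\phi}) \in \Omega$ for some $\theta$. In particular $\widetilde{\theta} \in \Argmax_{\theta}\Ls(\theta, \widetilde{\phi})$ is well defined and
\begin{equation*}
\Ls(\widetilde{\theta}, \widetilde{\phi}) = -\mathbb{E}_{x \sim p(x)}\KLD{q_{\widetilde{\phi}}(z \mid x)}{p(z)}.
\end{equation*}
By Lemma~5 there exists $(\theta^*, \phi^*)$ with $\Ls(\theta^*, \phi^*) = \sup_{\theta,\phi}\Ls(\theta,\phi)$, and $(\theta^*, \phi^*) \in \Omega$.

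\textbf{Step 2: Lower bound via optimality of $(\theta_n, \phi_n)$.} Since $(\theta_n, \phi_n) \in \Argmax \Lo_{\tau_n}$, we have $\Lo_{\tau_n}(\theta_n, \phi_n) \ge \Lo_{\tau_n}(\theta^*, \phi^*)$. Because $(\theta^*, \phi^*) \in \Omega$, Lemma~1 gives $\Lo_{\tau_n}(\theta^*, \phi^*) \to \Ls(\theta^*, \phi^*)$, hence
\begin{equation*}
\liminf_{n \to \infty} \Lo_{\tau_n}(\theta_n, \phi_n) \ge \Ls(\theta^*, \phi^*).
\end{equation*}

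\textbf{Step 3: Upper bound via the $\KL$ term.} Each $\log\sigma_{\tau_n}(\cdot)$ is nonpositive (the relaxation takes values in $(0,1)$ for $\tau_n \in (0,1)$), so the reconstruction summand in $\Lo_{\tau_n}$ is $\le 0$ and we obtain the pointwise bound
\begin{equation*}
\Lo_{\tau_n}(\theta_n, \phi_n) \le -\mathbb{E}_{x \sim p(x)}\KLD{q_{\phi_n}(z \mid x)}{p(z)}.
\end{equation*}
By the equicontinuity-in-total-variation assumption, $q_{\phi_n}(z \mid x) \to q_{\widetilde{\phi}}(z \mid x)$ in total variation (hence weakly) for every $x$. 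The classical lower semicontinuity of $\KL$ divergence with respect to weak convergence of the first argument then gives
\begin{equation*}
\liminf_{n \to \infty}\,\mathbb{E}_{x}\KLD{q_{\phi_n}(z \mid x)}{p(z)} \ge \mathbb{E}_{x}\KLD{q_{\widetilde{\phi}}(z \mid x)}{p(z)},
\end{equation*}
and therefore
\begin{equation*}
\limsup_{n \to \infty}\Lo_{\tau_n}(\theta_n, \phi_n) \le -\mathbb{E}_{x}\KLD{q_{\widetilde{\phi}}(z \mid x)}{p(z)} = \Ls(\widetilde{\theta}, \widetilde{\phi}).
\end{equation*}

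\textbf{Step 4: Conclude.} Chaining the two bounds yields $\Ls(\theta^*, \phi^*) \le \Ls(\widetilde{\theta}, \widetilde{\phi})$, and the reverse inequality is automatic because $(\theta^*, \phi^*)$ attains the supremum. Hence $\Ls(\widetilde{\theta}, \widetilde{\phi}) = \sup_{\theta \in \Theta, \phi \in \Phi}\Ls(\theta, \phi)$.

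\textbf{Main obstacle.} The delicate point is Step~3: controlling $\mathbb{E}_{x}\KLD{q_{\phi_n}(z \mid x)}{p(z)}$ in the limit. Equicontinuity in total variation does not by itself guarantee continuity of the $\KL$ (which can jump up), so the cleanest route is to invoke weak lower semicontinuity of $\KL$, which is enough for the required one-sided bound. For the bounded-support kernels in Table~\ref{tab:kl_finite} the $\KL$ is in fact a closed-form continuous function of $(\mu, \sigma)$, so under the mild additional assumption that $(\mu^\phi(x), \sigma^\phi(x))$ depend continuously on $\phi$ (already noted in the paper after the theorem), full continuity holds and the argument goes through with a plain limit rather than a $\liminf$.
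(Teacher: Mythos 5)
Your proof is correct and follows essentially the same route as the paper's: both bound $\Lo_{\tau_n}(\theta_n, \phi_n)$ from below via its optimality against the supremum-attaining pair $(\theta^*, \phi^*)$ combined with pointwise convergence of $\Lo_{\tau_n}$, and from above by $-\mathbb{E}_{x}\KLD{q_{\phi_n}(z \mid x)}{p(z)}$ using the nonpositivity of the relaxed reconstruction term; you simply present the sandwich directly where the paper argues by contradiction. Your remark that only lower semicontinuity of the $\KL$ term at $\widetilde{\phi}$ is needed matches the one-sided inequality $\KL(\phi) > \KL(\widetilde{\phi}) - \epsilon$ that the paper actually invokes (under the label of continuity), so the two arguments coincide in substance.
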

\begin{proof}
Assume that $\Ls(\widetilde{\theta}, \widetilde{\phi}) < \Ls(\theta^*, \phi^*)$. Since $(\widetilde{\theta}, \widetilde{\phi}) \in \Omega$ and $(\theta^*, \phi^*) \in \Omega$, $\Ls(\widetilde{\theta}, \widetilde{\phi}) = -\KL(\widetilde{\phi})$ and $\Ls(\theta^*, \phi^*) = -\KL(\phi^*)$. As a result, from our assumption, $\KL(\phi^*) < \KL(\widetilde{\phi})$.

From continuity of $\KL(\phi)$ divergence, for any $\epsilon > 0$, exists $\delta > 0$ such that if $\|\widetilde{\phi} - \phi\| < \delta$,
\begin{equation}
    \KL(\phi) > \KL(\widetilde{\phi}) - \epsilon = \Ls(\widetilde{\theta}, \widetilde{\phi}) - \epsilon
\end{equation}
From the convergence of $\phi_n$ to $\widetilde{\phi}$ and convergence of $\tau_n$ to zero, there exists $N_1$ such that for any $n > N_1$, $\|\widetilde{\phi} - \phi_n\| < \delta$.

From pointwise convergence of $\Lo_{\tau_n}$ at point $(\theta^*, \phi^*)$ to $\Ls(\theta^*, \phi^*)$, for any $\epsilon > 0$, exists $N_2$ such that for all $n > N_2$, $\Lo_{\tau_n}(\theta^*, \phi^*) > \Ls(\theta^*, \phi^*) - \epsilon$. Also, $\Lo_{\tau_n}(\theta_n, \phi_n) \le -\KL(\phi_n)$ from the definition of $\Lo_{\tau_n}$ as a negative $\KL$ divergence plus some non-positive penalty for reconstruction error.

Taking $n > \max(N_1, N_2)$, we get the final chain of inequalities:
\begin{equation}
\begin{split}
    \Lo_{\tau_n}(\theta_n, \phi_n) & \le -\KL(\phi_n) < -\KL(\widetilde{\phi}) + \epsilon \\
    & = \Ls(\widetilde{\theta}, \widetilde{\phi}) + \epsilon < \Lo_{\tau_n}(\theta^*, \phi^*) - \epsilon + \epsilon \\
    & = \Lo_{\tau_n}(\theta^*, \phi^*)
\end{split}
\end{equation}
Hence, $\Lo_{\tau_n}(\theta_n, \phi_n) < \Lo_{\tau_n}(\theta^*, \phi^*)$, which contradicts $(\theta_n, \phi_n) \in \Argmax$ of $\Lo_{\tau_n}$. As a result, $\Ls(\widetilde{\theta}, \widetilde{\phi}) = \Ls(\theta^*, \phi^*)$.
\end{proof}

\section{Implementation details \label{hyperparameters}}
For all experiments, we provide configuration files in a human-readable format in the supplementary code. Here we provide the same information for convenience.

\subsection{Synthetic data}
Encoder and decoder were GRUs with $2$ layers of $128$ neurons. The latent size was $2$; embedding dimension was $8$. We trained the model for $100$ epochs with Adam optimizer with an initial learning rate $5\cdot10^{-3}$, which halved every $20$ epochs. The batch size was 512. We fine-tuned the model for $10$ epochs after training by fixing the encoder and learning only the decoder. For a proposed model with a uniform prior and a uniform proposal, we increased $\KL$ weight $\beta$ linearly from $0$ to $0.1$ during $100$ epochs. For the Gaussian and tricube proposals, we increased $\KL$ weight $\beta$ linearly from $0$ to $1$ during $100$ epochs. For all three experiments, we pretrained the autoencoder for the first two epochs with $\beta=0$. We annealed the temperature from $10^{-1}$ to $10^{-3}$ during $100$ epochs of training in a log-linear scale. For a tricube proposal, we annealed the temperature to $10^{-2}$.

\begin{figure*}[ht]
\begin{center}
\includegraphics[width=1\columnwidth]{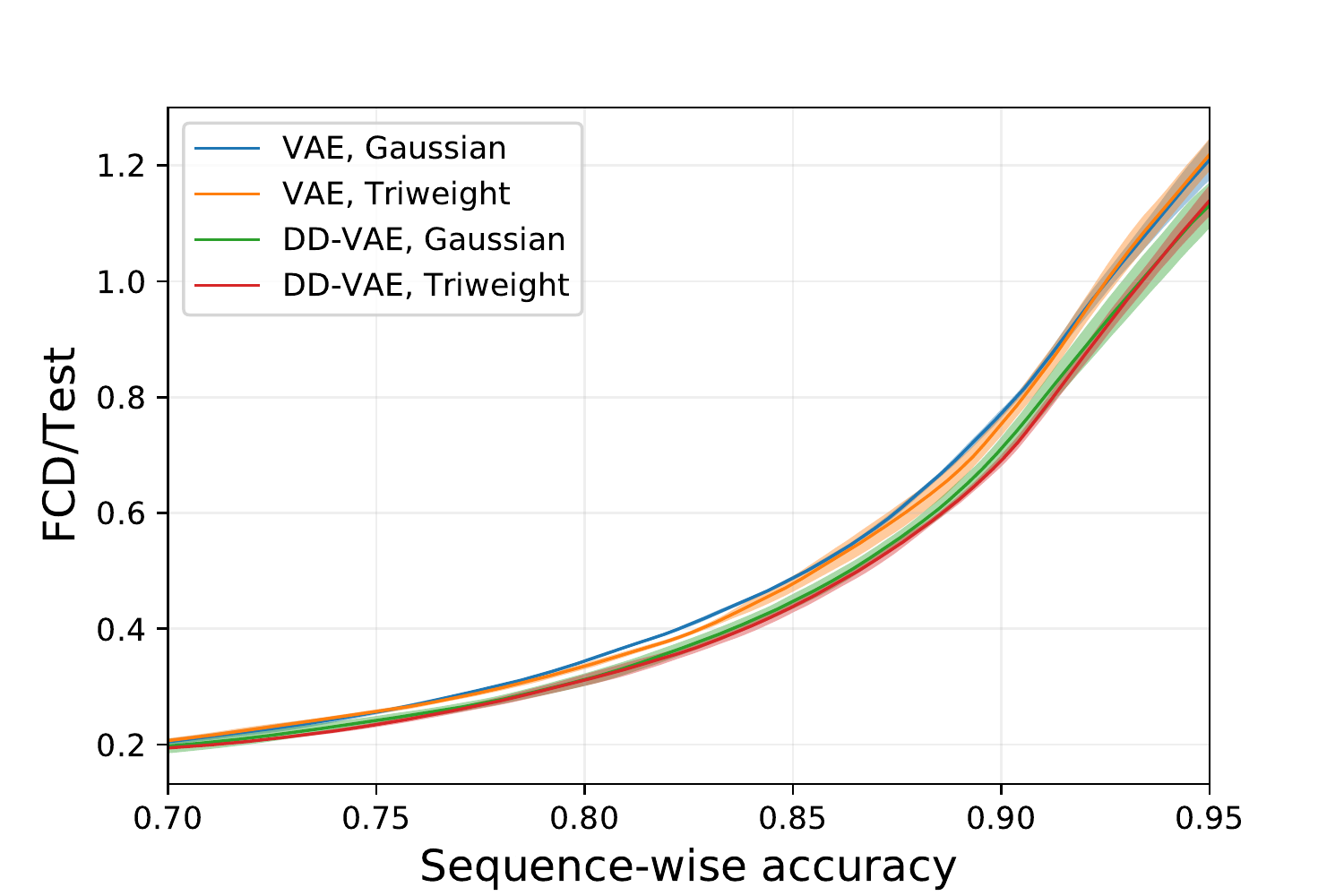}~\includegraphics[width=1\columnwidth]{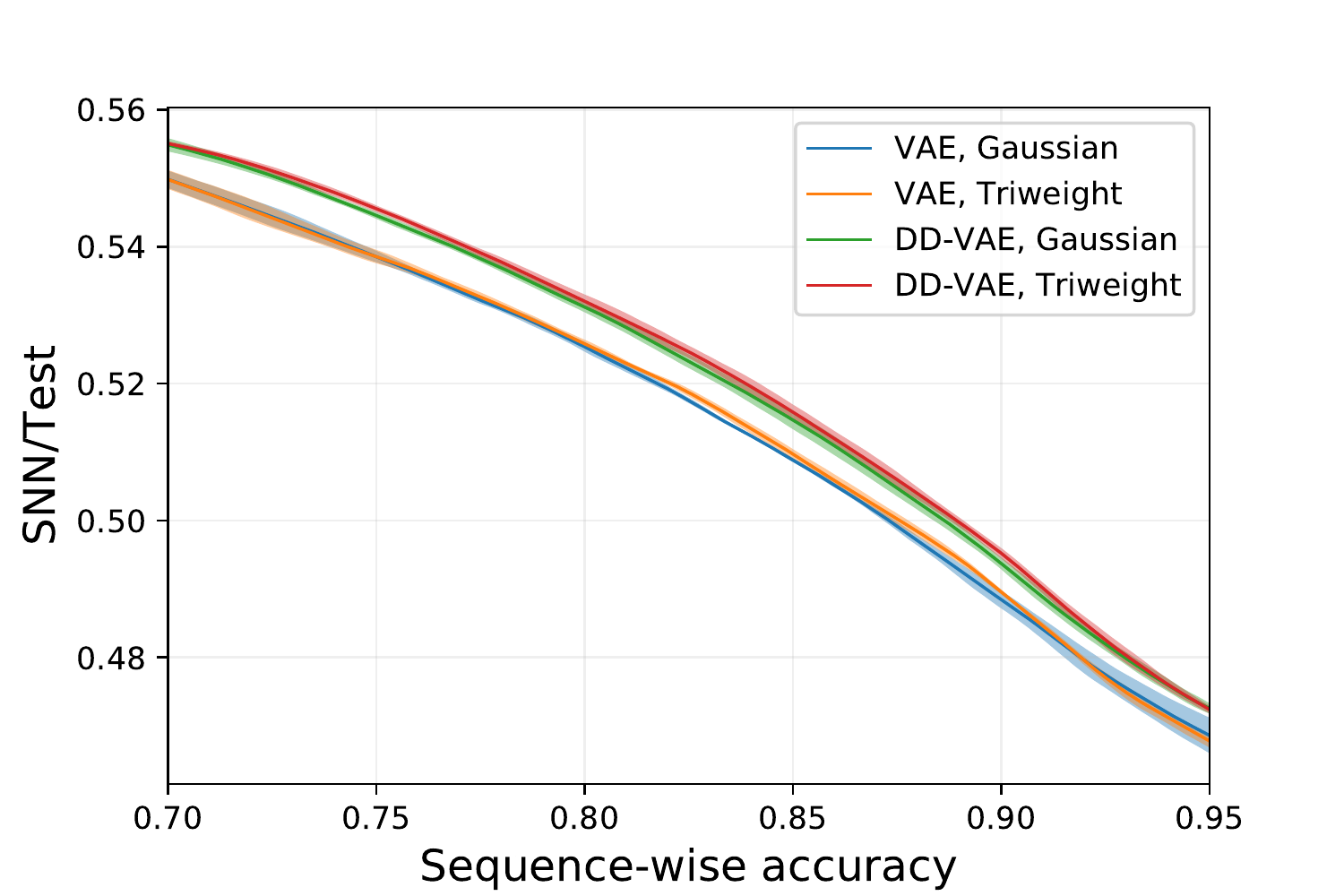}
\caption{Distribution learning with deterministic decoding on MOSES dataset: FCD/Test (lower is better) and SNN/Test (higher is better). Solid line: mean, shades: std over multiple runs.} \label{fig:moses_appendix}
\end{center}
\end{figure*}

\subsection{Binary MNIST}
We binarized the dataset by thresholding original MNIST pixels with a value of $0.3$. We used a fully connected neural network with layer sizes $784 \to 256 \to 128 \to 32 \to 2$ with LeakyReLU activation functions. We trained the model for $150$ epochs with a starting learning rate $5\cdot10^{-3}$ that halved every $20$ epochs. We used a batch size $512$ and clipped the gradient with value $10$. We increased $\beta$ from $10^{-5}$ to $0.005$ for VAE and $0.05$ for DD-VAE. We decreased the temperature in a log scale from $0.01$ to $0.0001$.

\subsection{MOSES}
We used a $2$-layer GRU network with a hidden size of $512$. Embedding size was $64$, the latent space was $64$-dimensional. We used a tricube proposal and a Gaussian prior. We pretrained a model with a fixed $\beta$ for $20$ epochs and then linearly increased $\beta$ for $180$ epochs. We halved the learning rate after pretraining. For DD-VAE models, we decreased the temperature in a log scale from $0.2$ to $0.1$. We linearly increased $\beta$ divergence from $0.0005$ to $0.01$ for VAE models and from $0.0015$ to $0.02$.

\subsection{ZINC}
We used a $1$-layer GRU network with a hidden size of $1024$. Embedding size was $64$, the latent space was $64$-dimensional. We used a tricube proposal and a Gaussian prior. We trained a model for $200$ epochs with a starting learning rate $5\cdot10^{-4}$ that halved every $50$ epochs. We increased divergence weight $\beta$ from $10^{-3}$ to $0.02$ linearly during the first $50$ epochs for DD-VAE models, from $10^{-4}$ to $5\cdot 10^{-4}$ for VAE model, and from $10^{-4}$ to $8\cdot 10^{-4}$ for VAE model with a tricube proposal. We decreased the temperature log-linearly from $10^{-3}$ to $10^{-4}$ during the first $100$ epochs for DD-VAE models. With such parameters we achieved a comparable train sequence-wise reconstruction accuracy of $95\%$.

\section{MOSES distribution learning}
In Figure \ref{fig:moses_appendix}, we report detailed results for the experiment from Section 4.3.

\section{Best molecules found for ZINC \label{sec:best_mols}}
In Figure~\ref{fig:best_molecules_all_DT}, Figure~\ref{fig:best_molecules_all_DG}, Figure~\ref{fig:best_molecules_all_VT}, and Figure~\ref{fig:best_molecules_all_VG} we show the best molecules found with Bayesian optimization during 10-fold cross validation.

\begin{figure*}[t]
\begin{center}
\includegraphics[width=0.8\textwidth]{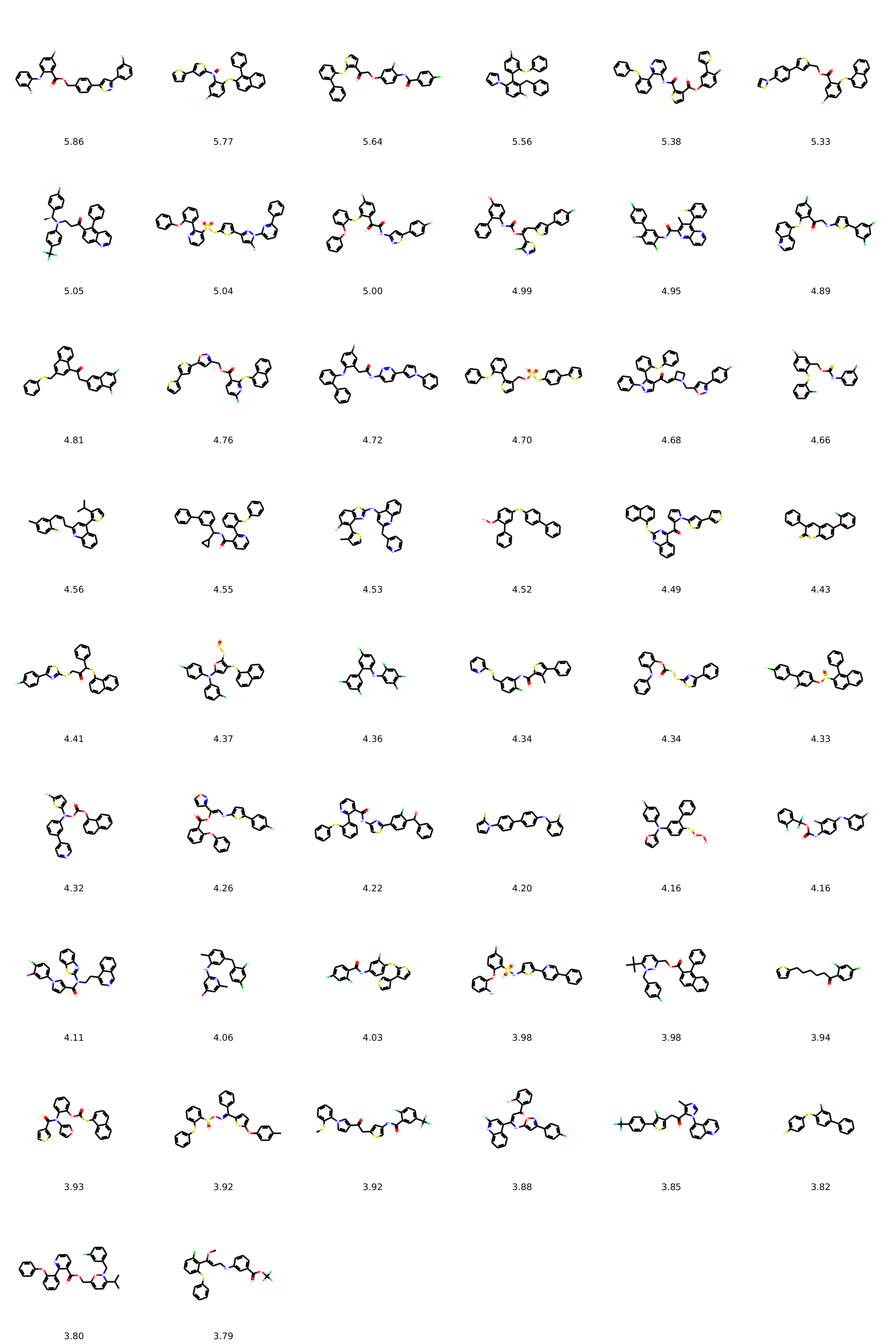}
\caption{DD-VAE with Tricube proposal. The best molecules found with Bayesian optimization during 10-fold cross validation and their scores.} \label{fig:best_molecules_all_DT}
\end{center}
\end{figure*}

\begin{figure*}[t]
\begin{center}
\includegraphics[width=0.8\textwidth]{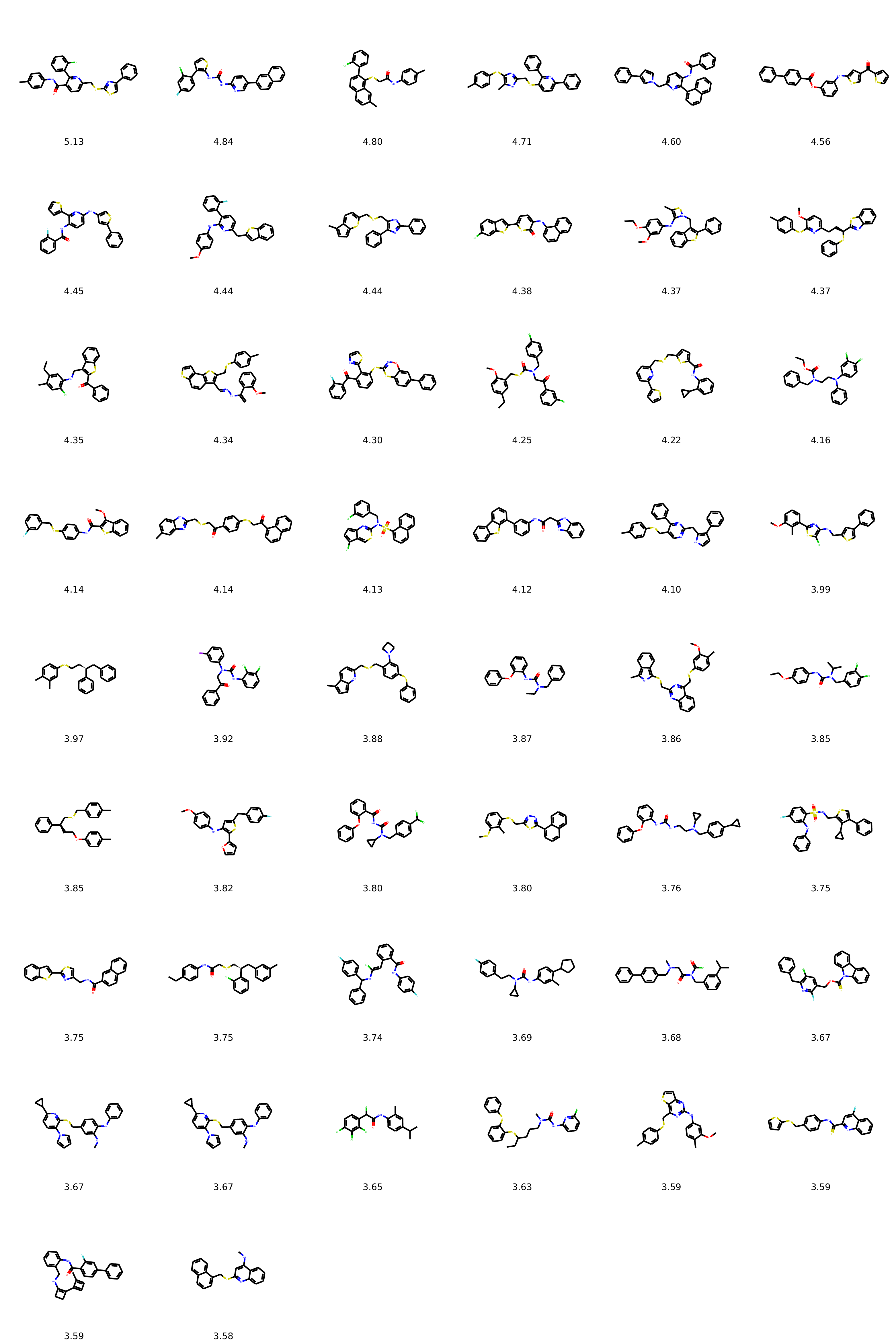}
\caption{DD-VAE with Gaussian proposal. The best molecules found with Bayesian optimization during 10-fold cross validation and their scores.} \label{fig:best_molecules_all_DG}
\end{center}
\end{figure*}

\begin{figure*}[t]
\begin{center}
\includegraphics[width=0.8\textwidth]{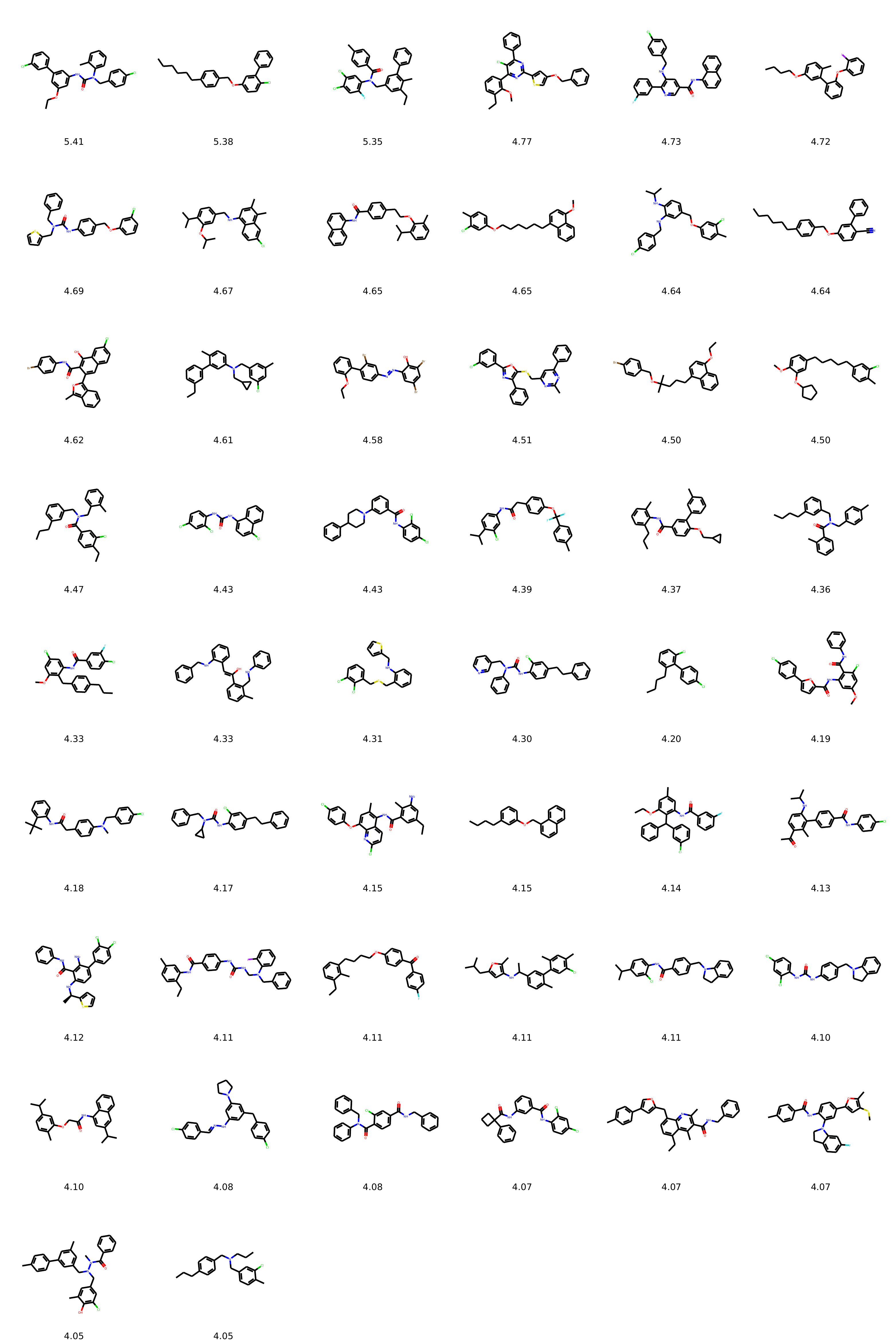}
\caption{VAE with Tricube proposal. The best molecules found with Bayesian optimization during 10-fold cross validation and their scores.} \label{fig:best_molecules_all_VT}
\end{center}
\end{figure*}

\begin{figure*}[t]
\begin{center}
\includegraphics[width=0.8\textwidth]{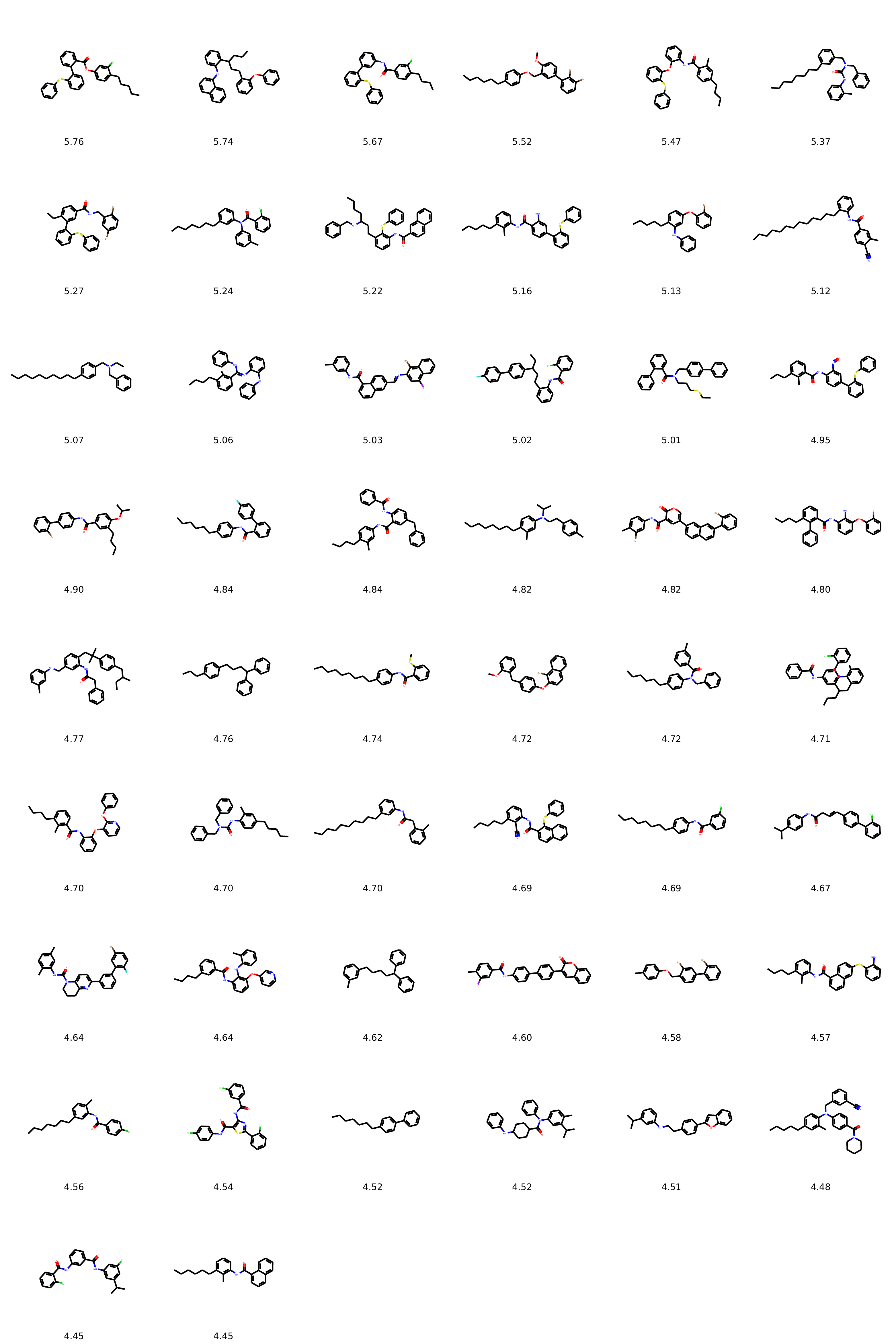}
\caption{VAE with Gaussian proposal. The best molecules found with Bayesian optimization during 10-fold cross validation and their scores.} \label{fig:best_molecules_all_VG}
\end{center}
\end{figure*}


\begin{thebibliography}{}

\bibitem[Chen et~al., 2017]{chen2016variational}
Chen, X., Kingma, D.~P., Salimans, T., Duan, Y., Dhariwal, P., Schulman, J.,
  Sutskever, I., and Abbeel, P. (2017).
\newblock {V}ariational {L}ossy {A}utoencoder.
\newblock {\em International Conference on Learning Representations}.

\bibitem[Cho et~al., 2014]{cho-etal-2014-learning}
Cho, K., van Merri{\"e}nboer, B., Gulcehre, C., Bahdanau, D., Bougares, F.,
  Schwenk, H., and Bengio, Y. (2014).
\newblock Learning phrase representations using {RNN} encoder{--}decoder for
  statistical machine translation.
\newblock In {\em Proceedings of the 2014 Conference on Empirical Methods in
  Natural Language Processing ({EMNLP})}, pages 1724--1734, Doha, Qatar.
  Association for Computational Linguistics.

\bibitem[Cressie, 1990]{cressie1990origins}
Cressie, N. (1990).
\newblock The origins of kriging.
\newblock {\em Mathematical geology}, 22(3):239--252.

\bibitem[Dai et~al., 2018]{dai2018syntax}
Dai, H., Tian, Y., Dai, B., Skiena, S., and Song, L. (2018).
\newblock Syntax-directed variational autoencoder for molecule generation.
\newblock In {\em Proceedings of the International Conference on Learning
  Representations}.

\bibitem[De~Cao and Kipf, 2018]{De_Cao2018-ju}
De~Cao, N. and Kipf, T. (2018).
\newblock {MolGAN}: An implicit generative model for small molecular graphs.

\bibitem[Dinh et~al., 2015]{dinh2014nice}
Dinh, L., Krueger, D., and Bengio, Y. (2015).
\newblock {NICE}: {N}on-linear {I}ndependent {C}omponents {E}stimation.
\newblock {\em International Conference on Learning Representations Workshop}.

\bibitem[Dinh et~al., 2017]{dinh2016density}
Dinh, L., Sohl-Dickstein, J., and Bengio, S. (2017).
\newblock {D}ensity {E}stimation {U}sing {R}eal {NVP}.
\newblock {\em International Conference on Learning Representations}.

\bibitem[Ertl and Schuffenhauer, 2009]{ertl2009estimation}
Ertl, P. and Schuffenhauer, A. (2009).
\newblock Estimation of synthetic accessibility score of drug-like molecules
  based on molecular complexity and fragment contributions.
\newblock {\em Journal of cheminformatics}, 1(1):8.

\bibitem[Ghosh et~al., 2020]{ghosh2019variational}
Ghosh, P., Sajjadi, M. S.~M., Vergari, A., Black, M., and Scholkopf, B. (2020).
\newblock From variational to deterministic autoencoders.
\newblock In {\em International Conference on Learning Representations}.

\bibitem[G{\'o}mez-Bombarelli et~al., 2018]{gomez2018automatic}
G{\'o}mez-Bombarelli, R., Wei, J.~N., Duvenaud, D., Hern{\'a}ndez-Lobato,
  J.~M., S{\'a}nchez-Lengeling, B., Sheberla, D., Aguilera-Iparraguirre, J.,
  Hirzel, T.~D., Adams, R.~P., and Aspuru-Guzik, A. (2018).
\newblock Automatic chemical design using a data-driven continuous
  representation of molecules.
\newblock {\em ACS central science}, 4(2):268--276.

\bibitem[Goodfellow et~al., 2014]{Goodfellow2014}
Goodfellow, I., Pouget-Abadie, J., Mirza, M., Xu, B., Warde-Farley, D., Ozair,
  S., Courville, A., and Bengio, Y. (2014).
\newblock Generative adversarial nets.
\newblock pages 2672--2680.

\bibitem[Higgins et~al., 2017]{higgins2017beta}
Higgins, I., Matthey, L., Pal, A., Burgess, C., Glorot, X., Botvinick, M.,
  Mohamed, S., and Lerchner, A. (2017).
\newblock beta-vae: Learning basic visual concepts with a constrained
  variational framework.
\newblock {\em ICLR}, 2(5):6.

\bibitem[Hsu et~al., 2019]{hsu2018hierarchical}
Hsu, W.-N., Zhang, Y., Weiss, R.~J., Zen, H., Wu, Y., Wang, Y., Cao, Y., Jia,
  Y., Chen, Z., Shen, J., et~al. (2019).
\newblock Hierarchical generative modeling for controllable speech synthesis.
\newblock {\em International Conference on Learning Representations}.

\bibitem[Jin et~al., 2018]{pmlr-v80-jin18a}
Jin, W., Barzilay, R., and Jaakkola, T. (2018).
\newblock Junction tree variational autoencoder for molecular graph generation.
\newblock In Dy, J. and Krause, A., editors, {\em Proceedings of the 35th
  International Conference on Machine Learning}, volume~80 of {\em Proceedings
  of Machine Learning Research}, pages 2323--2332, Stockholmsmässan, Stockholm
  Sweden. PMLR.

\bibitem[Kadurin et~al., 2016]{kadurin2016cornucopia}
Kadurin, A., Aliper, A., Kazennov, A., Mamoshina, P., Vanhaelen, Q., Khrabrov,
  K., and Zhavoronkov, A. (2016).
\newblock The cornucopia of meaningful leads: Applying deep adversarial
  autoencoders for new molecule development in oncology.
\newblock {\em Oncotarget}, 8(7):10883.

\bibitem[Kingma and Welling, 2013]{Kingma2013}
Kingma, D.~P. and Welling, M. (2013).
\newblock {A}uto-{E}ncoding {V}ariational {B}ayes.
\newblock {\em International Conference on Learning Representations}.

\bibitem[Kusner et~al., 2017]{kusner2017grammar}
Kusner, M.~J., Paige, B., and Hern{\'a}ndez-Lobato, J.~M. (2017).
\newblock Grammar variational autoencoder.
\newblock In {\em Proceedings of the 34th International Conference on Machine
  Learning-Volume 70}, pages 1945--1954. JMLR. org.

\bibitem[Kuznetsov et~al., 2019]{kuznetsov2019prior}
Kuznetsov, M., Polykovskiy, D., Vetrov, D.~P., and Zhebrak, A. (2019).
\newblock A prior of a googol gaussians: a tensor ring induced prior for
  generative models.
\newblock In {\em Advances in Neural Information Processing Systems}, pages
  4104--4114.

\bibitem[Landrum, 2006]{landrum2006rdkit}
Landrum, G. (2006).
\newblock Rdkit: Open-source cheminformatics.
\newblock {\em Online). http://www. rdkit. org. Accessed}, 3(04):2012.

\bibitem[LeCun and Cortes, 2010]{lecun-mnisthandwrittendigit-2010}
LeCun, Y. and Cortes, C. (2010).
\newblock {MNIST} handwritten digit database.

\bibitem[Makhzani et~al., 2016]{Makhzani2015}
Makhzani, A., Shlens, J., Jaitly, N., and Goodfellow, I. (2016).
\newblock Adversarial autoencoders.

\bibitem[Oord et~al., 2016]{pmlr-v48-oord16}
Oord, A.~V., Kalchbrenner, N., and Kavukcuoglu, K. (2016).
\newblock Pixel recurrent neural networks.
\newblock In Balcan, M.~F. and Weinberger, K.~Q., editors, {\em Proceedings of
  The 33rd International Conference on Machine Learning}, volume~48 of {\em
  Proceedings of Machine Learning Research}, pages 1747--1756, New York, New
  York, USA. PMLR.

\bibitem[Polykovskiy et~al., 2018a]{polykovskiy2018molecular}
Polykovskiy, D., Zhebrak, A., Sanchez-Lengeling, B., Golovanov, S., Tatanov,
  O., Belyaev, S., Kurbanov, R., Artamonov, A., Aladinskiy, V., Veselov, M.,
  Kadurin, A., Nikolenko, S., Aspuru-Guzik, A., and Zhavoronkov, A. (2018a).
\newblock {M}olecular {S}ets ({MOSES}): {A} {B}enchmarking {P}latform for
  {M}olecular {G}eneration {M}odels.
\newblock {\em arXiv preprint arXiv:1811.12823}.

\bibitem[Polykovskiy et~al., 2018b]{Polykovskiy2018}
Polykovskiy, D., Zhebrak, A., Vetrov, D., Ivanenkov, Y., Aladinskiy, V.,
  Bozdaganyan, M., Mamoshina, P., Aliper, A., Zhavoronkov, A., and Kadurin, A.
  (2018b).
\newblock Entangled conditional adversarial autoencoder for de-novo drug
  discovery.
\newblock {\em Molecular Pharmaceutics}.

\bibitem[Preuer et~al., 2018]{Preuer2018-pf}
Preuer, K., Renz, P., Unterthiner, T., Hochreiter, S., and Klambauer, G.
  (2018).
\newblock Fr{\'e}chet {ChemNet} distance: A metric for generative models for
  molecules in drug discovery.
\newblock {\em J. Chem. Inf. Model.}, 58(9):1736--1741.

\bibitem[Razavi et~al., 2019]{razavi2019generating}
Razavi, A., Oord, A. v.~d., and Vinyals, O. (2019).
\newblock Generating diverse high-fidelity images with vq-vae-2.
\newblock {\em Advances In Neural Information Processing Systems}.

\bibitem[Segler et~al., 2018]{Segler2018-eo}
Segler, M. H.~S., Kogej, T., Tyrchan, C., and Waller, M.~P. (2018).
\newblock Generating focused molecule libraries for drug discovery with
  recurrent neural networks.
\newblock {\em ACS Cent Sci}, 4(1):120--131.

\bibitem[Semeniuta et~al., 2017]{semeniuta-etal-2017-hybrid}
Semeniuta, S., Severyn, A., and Barth, E. (2017).
\newblock A hybrid convolutional variational autoencoder for text generation.
\newblock In {\em Proceedings of the 2017 Conference on Empirical Methods in
  Natural Language Processing}, pages 627--637, Copenhagen, Denmark.
  Association for Computational Linguistics.

\bibitem[Snelson and Ghahramani, 2006]{snelson2006sparse}
Snelson, E. and Ghahramani, Z. (2006).
\newblock Sparse gaussian processes using pseudo-inputs.
\newblock In {\em Advances in neural information processing systems}, pages
  1257--1264.

\bibitem[Tolstikhin et~al., 2016]{tolstikhin2017wasserstein}
Tolstikhin, I., Bousquet, O., Gelly, S., and Schoelkopf, B. (2016).
\newblock Wasserstein auto-encoders.

\bibitem[Tomczak and Welling, 2018]{pmlr-v84-tomczak18a}
Tomczak, J. and Welling, M. (2018).
\newblock Vae with a vampprior.
\newblock In Storkey, A. and Perez-Cruz, F., editors, {\em Proceedings of the
  Twenty-First International Conference on Artificial Intelligence and
  Statistics}, volume~84 of {\em Proceedings of Machine Learning Research},
  pages 1214--1223, Playa Blanca, Lanzarote, Canary Islands. PMLR.

\bibitem[van~den Oord et~al., 2018]{pmlr-v80-oord18a}
van~den Oord, A., Li, Y., Babuschkin, I., Simonyan, K., Vinyals, O.,
  Kavukcuoglu, K., van~den Driessche, G., Lockhart, E., Cobo, L., Stimberg, F.,
  Casagrande, N., Grewe, D., Noury, S., Dieleman, S., Elsen, E., Kalchbrenner,
  N., Zen, H., Graves, A., King, H., Walters, T., Belov, D., and Hassabis, D.
  (2018).
\newblock Parallel {W}ave{N}et: Fast high-fidelity speech synthesis.
\newblock In Dy, J. and Krause, A., editors, {\em Proceedings of the 35th
  International Conference on Machine Learning}, volume~80 of {\em Proceedings
  of Machine Learning Research}, pages 3918--3926, Stockholmsmässan, Stockholm
  Sweden. PMLR.

\bibitem[Weininger, 1970]{Weininger1970}
Weininger, D. (1970).
\newblock Smiles, a chemical language and information system. 1. introduction
  to methodology and encoding rules.
\newblock 17:1--14.

\bibitem[Weininger et~al., 1989]{Weininger1989}
Weininger, D., Weininger, A., and Weininger, J.~L. (1989).
\newblock Smiles. 2. algorithm for generation of unique smiles notation.
\newblock {\em Journal of chemical information and computer sciences},
  29(2):97--101.

\bibitem[You et~al., 2018]{You2018-yp}
You, J., Ying, R., Ren, X., Hamilton, W., and Leskovec, J. (2018).
\newblock {GraphRNN}: Generating realistic graphs with deep auto-regressive
  models.
\newblock In {\em International Conference on Machine Learning}, pages
  5694--5703.

\bibitem[Yu et~al., 2017]{yu2017seqgan}
Yu, L., Zhang, W., Wang, J., and Yu, Y. (2017).
\newblock Seqgan: Sequence generative adversarial nets with policy gradient.
\newblock In {\em Thirty-First AAAI Conference on Artificial Intelligence}.

\bibitem[Zhao et~al., 2019]{zhao2019infovae}
Zhao, S., Song, J., and Ermon, S. (2019).
\newblock Infovae: Balancing learning and inference in variational
  autoencoders.
\newblock In {\em Proceedings of the AAAI Conference on Artificial
  Intelligence}, volume~33, pages 5885--5892.

\bibitem[Zhavoronkov et~al., 2019]{zhavoronkov2019deep}
Zhavoronkov, A., Ivanenkov, Y., Aliper, A., Veselov, M., Aladinskiy, V.,
  Aladinskaya, A., Terentiev, V., Polykovskiy, D., Kuznetsov, M., Asadulaev,
  A., Volkov, Y., Zholus, A., Shayakhmetov, R., Zhebrak, A., Minaeva, L.,
  Zagribelnyy, B., Lee, L., Soll, R., Madge, D., Xing, L., Guo, T., and
  Aspuru-Guzik, A. (2019).
\newblock Deep learning enables rapid identification of potent ddr1 kinase
  inhibitors.
\newblock {\em Nature biotechnology}, pages 1--4.

\end{thebibliography}
\end{document}